\newenvironment{sciabstract}{%
\begin{quote} \bf}
{\end{quote}}
\newtheorem{definition}{Definition}
\newtheorem{theorem}{Theorem}
\newtheorem{lemma}{Lemma}
\newtheorem{corollary}{Corollary}
\definecolor{ao(english)}{rgb}{0.0, 0.5, 0.0}
\definecolor{ckcolor}{rgb}{0.5, 0.0, 0.5}
\definecolor{jpdcolor}{rgb}{0.5, 0.5, 0.0}
\definecolor{kacolor}{rgb}{0.5, 0.5, 0.5}
\def\comments{1}
\newcommand{\DCM}[1]{{\bfseries \color{ao(english)}DCM: #1}}
\newcommand{\JPD}[1]{{\bfseries \color{jpdcolor}JPD: #1}}
\newcommand{\KA}[1]{{\bfseries \color{kacolor}KA: #1}}
\newcommand{\DCM}[1]{}
\newcommand{\CK}[1]{}
\newcommand{\JPD}[1]{}
\newcommand{\KA}[1]{}
\newcommand{\todo}[1]{}
\newcommand{\fair}{proportional}
\newcommand{\fairness}{proportionality}
\newcommand{\Fair}{Proportional}
\newcommand{\MAX}{\texttt{Max}}  % MaxWeight
\newcommand{\OPT}[1]{\texttt{OPT}(#1)}
\newcommand{\RAND}{\texttt{Rand}}
\newcommand{\mT}{\mathcal{T}}
\title{\vspace{-1in}
Matching Algorithms for Blood Donation}
\author{%
Duncan C.\ McElfresh,$^{1\ast}$ Christian Kroer,$^{2}$ Sergey Pupyrev,$^{3}$ Eric Sodomka,$^{3}$\\
Karthik Sankararaman,$^{3}$ Zack Chauvin,$^{3}$ Neil Dexter,$^{3}$ John P.\ Dickerson$^{1}$\\
\\
\normalsize{$^{1}$University of Maryland, College Park, MD 20742, USA.}\\
\normalsize{$^{2}$Columbia University, New York, NY 10027, USA.}\\
\normalsize{$^{3}$Facebook, Menlo Park, CA 94025, USA.}
% \normalsize{$^\ast$To whom correspondence should be addressed; E-mail:  dmcelfre@umd.edu.}
}
\date{}
\begin{document}

% Double-space the manuscript.

\baselineskip24pt

\maketitle 

% Abstract. Note that this must come before \maketitle.
\begin{sciabstract}
    Global demand for donated blood far exceeds supply, and unmet need is greatest in low- and middle-income countries; experts suggest that large-scale coordination is necessary to alleviate demand.
    Using the Facebook Blood Donation tool, we conduct the first large-scale algorithmic matching of blood donors with donation opportunities.
    While measuring actual donation rates remains a challenge, we measure \emph{donor action} (e.g., making a donation appointment) as a proxy for actual donation.
    We develop automated policies for matching donors with donation opportunities, based on an online matching model. We provide theoretical guarantees for these policies, both regarding the number of expected donations and the equitable treatment of blood recipients.
    In simulations, a simple matching strategy increases the number of donations by $5$-$10\%$; a pilot experiment with real donors shows a $5\%$ relative increase in donor action rate (from $3.7\%$ to $3.9\%$).
    When scaled to the global Blood Donation tool user base, this corresponds to an increase of around one hundred thousand users taking action toward donation.
    Further, observing donor action on a social network can shed light onto donor behavior and response to incentives.
    %
    % !!! removed for Science
    Our initial findings align with several observations made in the medical and social science literature regarding donor behavior.
\end{sciabstract}

% \paragraph{Significance Statement:}

% Global demand for donated blood far exceeds supply, and unmet need is greatest in low- and middle-income countries; experts suggest that large-scale coordination is necessary to alleviate demand. 
% %
% Using the Facebook Blood Donation tool, we conduct the first large-scale algorithmic matching of blood donors with donation opportunities. 
% %
% In simulations, a simple matching strategy increases donor action rate by 5-10\%; a pilot experiment with real donors shows an increase of 5\%. 
% %
% When scaled to the global Blood Donation tool user base, this corresponds to an increase of around one hundred thousand users taking action toward donation.

% \section{Introduction}\label{sec:intro}
%%%%%%%%%%%%%%%%%%%%%%%%%%%%%%%%%%%%%%%%%%%%%%%%%%%%%%%%%%%%%%%%%%%%%%

% Samples of sectioning (and labeling) in MNSC
% NOTE: (1) \section and \subsection do NOT end with a period
%       (2) \subsubsection and lower need end punctuation
%       (3) capitalization is as shown (title style).
%
%\section{Introduction.}\label{intro} %%1.
%\subsection{Duality and the Classical EOQ Problem.}\label{class-EOQ} %% 1.1.
%\subsection{Outline.}\label{outline1} %% 1.2.
%\subsubsection{Cyclic Schedules for the General Deterministic SMDP.}
%  \label{cyclic-schedules} %% 1.2.1
%\section{Problem Description.}\label{problemdescription} %% 2.

% Text of your paper here

\section{Introduction}\label{sec:intro}

Blood is a scarce resource; its donation saves the lives of those in need.  
Countries approach blood donation in different ways, running the gamut from privately-run to state-run programs, with or without monetary compensation, and with varying degrees of public campaigns for action.\footnote{Some examples follow. China maintains state control of its donation centers, which take a mix of captive-, quota-, and voluntary-based donations~\cite{Guan18:Voluntary}.
The US mixes state- and private-run donation that is primarily sourced via voluntary donations~\cite{Osorio15:Structured}.  
Brazil has seen a recent shift from remunerated to non-remunerated (aka voluntary) donation at its initially state-run, and now Federally-run, centers~\cite{Carneiro10:Demographic}.}  
As such, blood donation rates differ across different countries; for example, approximately 3.2\%, 1.5\%, 0.8\%, and 0.5\% of the population donates in high-, upper-middle-, lower-middle-, and low-income countries, with varying rates of voluntary versus paid donors~\cite{WHO17:Blood}.  
Yet demand for blood still far exceeds supply, and unmet need is greatest in low- and middle-income countries~\cite{roberts2019global}.
Thus, experts suggest that the blood supply chain---collection, testing, processing, storage, and distribution---be managed at a national level~\cite{WHO17:Blood,roberts2019global}.

Optimization-based approaches to blood supply chain management have a rich history in the operations research and healthcare management literature. 
\cite{Osorio15:Structured} reviews over 100 publications in this space since 1963.  
The supply chain is roughly split into collection, testing \& processing, storage \& inventory, and distribution \& transfusion~\cite{Osorio17:Simulation}.  
Substantial research effort has gone into each of those segments~\cite{Katsaliaki07:Using,Zahiri17:Blood,Dillon17:Two,el2018robust,prastacos1980pbds}.
Yet, we note that most optimization-based research in the initial \emph{collection} stage of the blood supply chain has focused on \emph{prediction} of blood supply (e.g., during a crisis).  
In this work, we instead focus on the \emph{creation} and \emph{coordination} of blood supply via automated social prompts, subject to the expressed preferences and constraints of potential donors and the overall donation system.
That is, we focus on the \emph{donor recruitment} stage of the blood supply chain (see Figure~\ref{fig:flowchart}).

\begin{figure}[ht!]
    \centering
    \includegraphics[width=0.85\textwidth]{./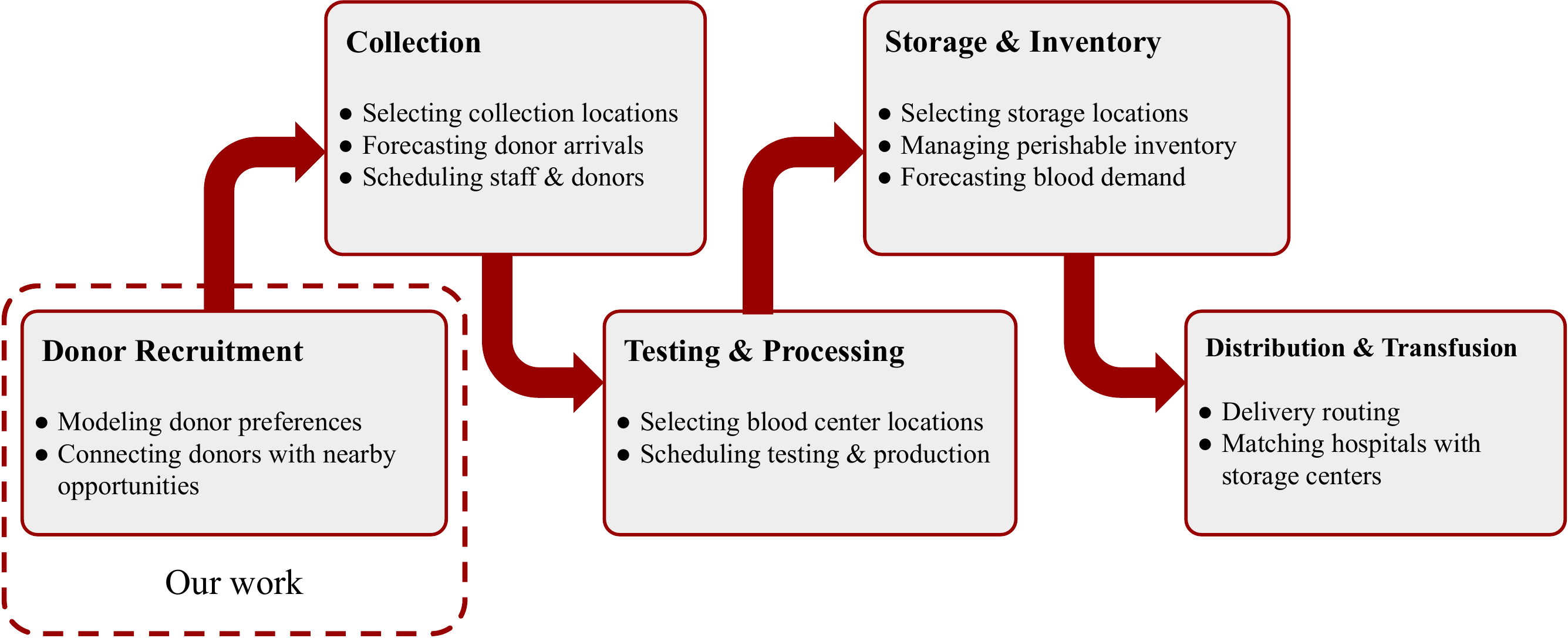}
    \caption{Stages of the blood supply chain. Our work---donor recruitment---precedes the four stages of the blood supply chain as described in \cite{Osorio15:Structured}.}
    \label{fig:flowchart}
\end{figure}

Donor recruitment has also been a topic of study for decades.
Factors like social pressure~\cite{sojka2008blood}, empathetic messaging~\cite{reich2006randomized}, and non-monetary incentives~\cite{chell2018systematic} can increase donation rates.
Negative past experiences, and real or perceived barriers to donation, can also impede donation rates~\cite{van2014predicting,godin2005factors,craig2017waiting}.
Most importantly, this body of work suggests that \emph{different donors are motivated by different factors}.
In other words, personalized recruitment strategies---which respect diverse donor motivations, preferences, and perceived barriers to donation---should be more effective than a uniform recruitment strategy.

Our work leverages the widespread use of web-based applications (apps) and social media platforms, which already play a substantial role in blood donor recruitment.
The American Red Cross, which provides about 40\% of transfused blood in the United States,\footnote{\url{https://www.redcrossblood.org/donate-blood/how-to-donate/how-blood-donations-help/blood-needs-blood-supply.html}} recently launched an app to connect blood donors with donation opportunities.\footnote{\url{https://www.redcrossblood.org/blood-donor-app.html}}
A review by~\cite{ouhbi2015free} identifies 169 free mobile apps for blood donation; though many of these apps have usability and privacy issues that may prevent widespread use.
In a survey of donors at a German hospital, \cite{sumnig2018role} finds that social media platforms Jodel and Facebook are a major motivation for donation---especially for first-time donors.
Similar studies find that WhatsApp and Twitter help promote donation in Saudi Arabia~\cite{alanzi2019use} and India~\cite{abbasi2018saving}.
%
% A recent survey conducted with Facebook's partner blood banks finds that 20\% of donors claim that Facebook influenced their decision to donate.\footnote{Facebook Newsroom article: \texttt{https://about.fb.com/news/2019/06/us-blood-donations/}.}

\begin{figure}
    \centering
    \begin{subfigure}[t]{.39\textwidth}
  \centering
    \includegraphics[width=\linewidth]{./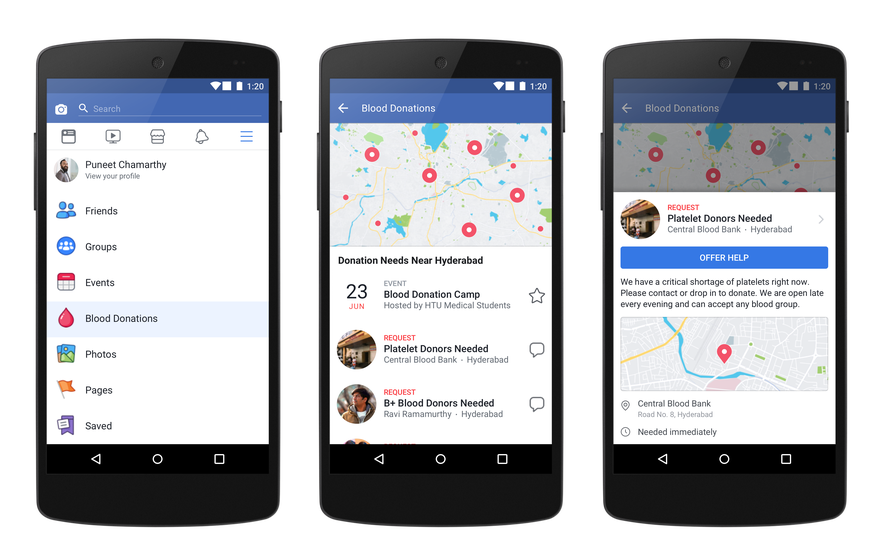}
  \caption{}
  \label{fig:fb-blood-donation}
\end{subfigure}%
\begin{subfigure}[t]{.59\textwidth}
  \centering
  \includegraphics[width=\linewidth]{./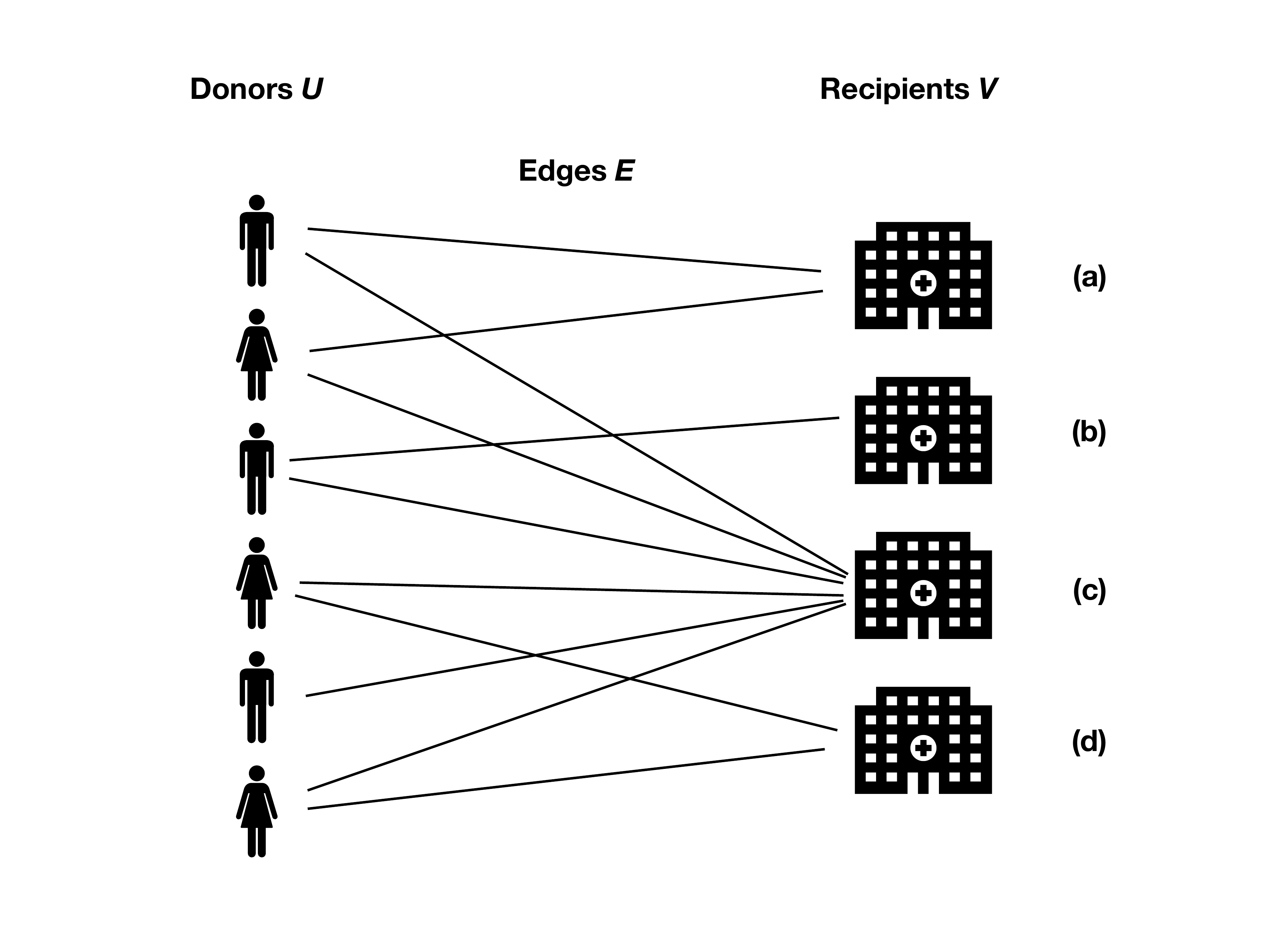}
  \caption{}
  \label{fig:matching}
\end{subfigure}
    \caption{(a) The Facebook Blood Donation tool interface, where users can search for donation opportunities, and opt in to receive notifications about nearby opportunities as they arise. (Source: \protect\url{https://about.fb.com/news/2018/06/making-it-easier-to-donate-blood}.) (b) an example matching graph, with donors (Facebook users who opt in to receive notifications about nearby opportunities), recipients (e.g., hospitals and blood banks), and edges (potential notifications that can be sent to donors).}
    % \label{fig:fb-blood-donation}
\end{figure}

Herein we propose a personalized donor recruitment strategy using the recently developed Facebook Blood Donation tool,\footnote{\texttt{https://socialgood.fb.com/health/blood-donations/}} which connects millions of potential blood donors with opportunities to donate, in several countries around the world. 
Users of this tool can opt in to receive notifications about nearby donation opportunities. 
Our strategy aims to notify donors about opportunities they are \emph{more likely} to take action on.
We frame this notification scenario as an online bipartite matching problem~\cite{Karp90:Optimal}---a well-studied paradigm which has been applied to a variety of settings including online advertising~\cite{Mehta07:AdWords} and rideshare services~\cite{Dickerson18:Allocation,lowalekar2018online,wang2018stable}. 
We demonstrate, both in computational simulations and in a real A/B test, that even a simple matching policy can substantially increase the likelihood of donor action. 
%
% Below we describe our matching policy and our experimental results in greater detail.

\section{Online Platform: the Facebook Blood Donation Tool}\label{sec:materials-methods}

The advent of global social networks offers a unique opportunity to recruit and coordinate massive numbers of donors, in order to meet a large and unpredictable demand for donor blood. 
The Facebook Blood Donation Tool aims to seize this opportunity---leveraging the widespread use of its online platform to connect blood donors with nearby recipients (see Figure~\ref{fig:fb-blood-donation}).
Donors can also opt in to receive \emph{notifications} about nearby donation opportunities. 
This tool is available in several countries around the world;\footnote{As of February 2021, the Blood Donation Tool has been approved in Bangladesh, Brazil, Burkina Faso, Chad, Cote d’Ivoire, Egypt, England, Guinea, Hong Kong, India, Kenya, Mali, Mexico, Mongolia, Namibia, Netherlands, Niger, Northern Ireland, Pakistan, Peru, Rwanda, Senegal, South Africa, the United States, Taiwan, Wales and Zimbabwe (see \texttt{\url{https://socialimpact.facebook.com/health/blood-donations/}}).} as of December 2020, more than 85 million people have registered with this tool.\footnote{\texttt{\url{https://socialimpact.facebook.com/health/blood-donations/}}.}

In this paper we focus on a small but important feature of the Blood Donation Tool: automatic donor notifications. 
Our primary goal is to \emph{increase the number of blood donations around the world} by carefully selecting \emph{which opportunity} to notify each donor about, and \emph{when} to notify them. 
We frame this question of donor notifications as an \emph{online matching problem}.
One might ask whether such a complicated approach is warranted in this setting---perhaps it does not matter how and when donors are notified. 
To better motivate our approach, we first answer the question: how can we tell whether a Facebook user donates blood after we notify them?

\subsection{Measuring Donation: Meaningful Action.}\label{sec:measuring}

To design notifications that effectively encourage blood donation, it is necessary to know \emph{when} donations occur.
However social networking platforms like Facebook cannot directly observe a user's action outside the platform. 
As a proxy, we instead observe when a donor takes \emph{meaningful action} toward donation after being notified. 
In our context, Meaningful Actions (MA) include user behaviors such as creating a reminder to donate, or calling a blood bank; note that these actions are only observed if taken within the Facebook platform.

It is beyond the scope of this study to validate MA as a proxy for actual donation, however initial results indicate that MA is a reliable indicator.
For example, a 2018 Facebook study with its partner donation sites in India and Brazil found that 20\% of donors said that Facebook influenced their decision to donate blood.\footnote{Ibid.}
% \footnote{\texttt{\url{https://socialimpact.facebook.com/health/blood-donations/}}.}
%
In the remainder of this paper, we focus on increasing the number of donor MAs as a proxy for increasing the number of donations. 
Our goal is to design a notification policy that chooses both (a) \emph{when} to notify a donor, and (b) \emph{which donation opportunity} to notify them about. 
The next step in designing this policy is to understand \emph{which}
notifications are likely to prompt donor MA. We begin with some high-level observations.

As an initial analysis we consider all notifications sent to donors using the Facebook Blood Donation tool over a one-month period.\footnote{Hundreds of millions of notifications.} Below we describe some high-level observations; we leave a deeper analysis to future work.
% \begin{enumerate}[label=(\alph*)]
\begin{enumerate}
    \item \textbf{Users rarely take meaningful action in response to notifications:} between 3\% and 4\% of all notifications lead to meaningful action.
    \item \textbf{More-engaged donors are more likely to take meaningful action:} Donors who tend to use Facebook every day are about 43\% more likely to take meaningful action in response to a notification than those who use Facebook about once per week.
    \item \textbf{New users are more likely to take action:} donors who joined Facebook within the last year are about 35\% more likely to take action in response to a notification that those who have been users for longer.
    \item \textbf{Older donors are more likely to take action:} donors over 30 years old are about 22\% more likely to take action in response to a notification than donors under 30.
    \item \textbf{Donors are more likely to take action if they are notified about a nearby opportunity:} Donors who are notified about opportunities less than 3km away are 20\% more likely to take action than those who are notified about further-away opportunities.
    \item \textbf{Donors are more likely to take action if they haven't been notified recently:} Donors who haven't been notified about a donation opportunity in the past 60 days are about 12\% more likely to take action in response to a notification than those who have been notified in the past 60 days.
\end{enumerate}

We emphasize that several of these observations have been reflected in prior studies:
(1) reflects the observation of \cite{sojka2008blood} and \cite{sumnig2018role} that social pressure and influence from family or friends can increase donation rates. 
(5) reflects the finding of \cite{van2014predicting} and \cite{godin2005factors} that logistical barriers to donation can impede donation rates.
(6) reflects the finding of \cite{yuan2016blood} that blood donors can be burdened by receiving too many notifications.

The likelihood of donor MA varies significantly across several features of both the blood donor (e.g., when they were last notified) and donation opportunity (e.g., location). 
To better understand these dependencies we train a predictive model for estimating likelihood of donor MA, using all available data from prior notifications. 
This model is used in both our offline and online experiments.

\subsection{Machine Learning Model for Donor Action}\label{sec:edgeweights}

To develop a machine learning (ML) model of donor action, we use all prior notifications sent by the Facebook Blood Donation tool. 
This model takes an individual notification as input, and predicts the \emph{probability} that the donor will take action. 
Each notification is represented by a set of \emph{features} of both the donor and the donation opportunity (i.e., the independent variables); the dependent variable is binary (i.e., whether or not the donor took MA).
Before being deployed, this ML model and application passed through Facebook's internal review process to protect user privacy.

Prior to training this model, we use industry-standard feature selection techniques to identify the most important features for predicting donor MA; these features are (in decreasing order of importance, with importance percentage in parenthesis): (1) whether the donor recently took meaningful action ($18\%$), (2) donor age ($8.5\%$), (3) donor city ($7.5\%$), (4) the number of Facebook friends the donor has ($7.3\%$), (5) the distance between donor and recipient ($6.8\%$).
Other relevant features include the number of local donors ($6.5\%$), number of times a donor has viewed the hub in the last $30$ days, and the number of days since the donor's last notification.

\begin{figure}
    \centering
    \includegraphics[width=0.4\textwidth]{./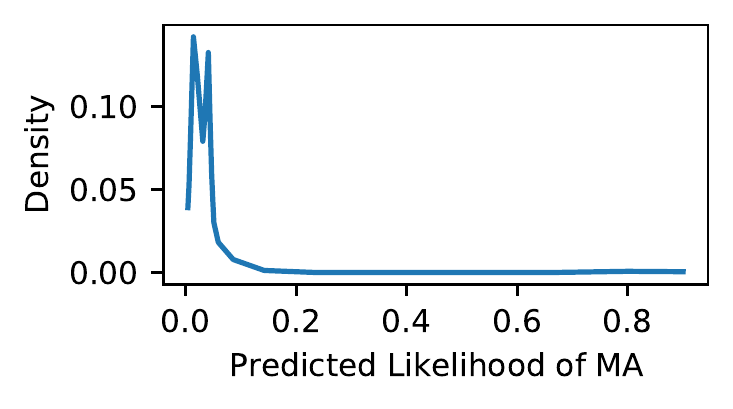}
    \caption{Density of estimated likelihood of MA, for all notifications in the training data.}
    \label{fig:predictionScores}
\end{figure}

Using the selected features, we train a gradient boosted decision tree (GBDT) model. %
We use standard parameter-sweep techniques to obtain the learning rate of $0.1$, $120$ trees, a maximum tree depth of $5$ and a maximum number of leaves of $120$. 
This model is trained using $10$-fold cross-validation on $80\%$ of the the training data and an additional $10\%$ for validation; it achieves an AUC of $0.66$ and logistic loss of $0.45$, averaged over all training folds. 
Training this model is particularly challenging because of the small number of ``positive'' examples (i.e., the number of donor MAs).
Figure~\ref{fig:predictionScores} shows the density of prediction scores returned from this model, over all training data.
Most prediction scores are between $0$-$10\%$, with an average of $3.43\%$---quite close to the observed likelihood of MA. 

We use this model to estimate \emph{how likely} it is that a donor will take action, when notified about a particular donation opportunity. 
Next we describe how this model is used to design a notification policy: by framing blood donor recruitment as a \emph{matching} problem.

\section{Matching Framework for Blood Donation}\label{sec:framework}
We represent a blood donation problem as a weighted bipartite \emph{donation graph} $G = (U,V,E)$, with donors $u \in U$ and donation opportunities (or \emph{recipients}) $v \in V$.\footnote{We use the terms ``donors" and ``recipients" as shorthand for \emph{prospective} donors and recipients. 
Facebook does not make any determination about a person’s eligibility to donate blood; these are potential donors who sign up to receive notifications of blood donation opportunities.} 
Each vertex has a set of \emph{attributes} (e.g., blood type, geographical location, and so on), and these attributes determine whether a donor $u$ can donate to a recipient $v$---i.e., whether $u$ and $v$ are \emph{compatible}. 
Compatible pairs $(u,v)$ are connected by edges $e=(u,v)\in E$; we denote all edges adjacent to vertices $u\in U$ ($v\in V$) as $E_{u:}$ ($E_{:v}$).

If an edge $e=(u,v)$ exists, then donor $u$ can be \emph{notified} about $v$.\footnote{In this initial work, we assume the set of potential donors and donation centers do not change, although this \emph{longer-term dynamism} is certainly interesting to consider as future research.} 
We discretize time into days $t\in \mT \equiv \{1,\dots T\}$, with a finite-time horizon $T$.
In our setting both donors an recipients are dynamic, in the sense that some donors and recipients are available at certain time steps.
This notion of dynamism is designed specifically to represent a blood donation setting.

We assume that donors may receive only one notification at each time step, however \emph{any number} of donors may be notified about the same recipient on any time step.
Thus, our setting more-closely resembles $b$-matching~\cite{anstee1987polynomial} than traditional bipartite matching.

\paragraph{Edge Weights:} 
Each edge $(u,v)$ has weight equal to the \emph{probability} that donor $u$ donates to recipient $v$ once notified (i.e., the predicted MA likelihood, see \S\ref{sec:edgeweights}); we assume that edge weights $w_{et}$ are indexed by edge $e$ and time step $t$.
In other words, some edges (notifications) are more likely than others to result in donation: for example, certain people may be more likely than others to donate (e.g., people who have donated frequently in the past, as observed by~\cite{godin2007determinants}) and people may prefer to donate on specific days more than others.

\paragraph{Recipients:} We consider both \emph{static} recipients $S\subseteq V$, such as blood banks and hospitals, and \emph{dynamic} recipients (or \emph{events}) $D\subseteq V$, such as blood drives or emergency requests. 
Static recipients are available during \emph{all} time steps, and edges into these recipients are always available.
\emph{Events} arrive in an online manner, and are available only during certain time steps.
We assume that the \emph{distribution} of recipient availability is known and defined by $p_{vt}\in [0, 1]$: the probability that recipient $v$ is available at time $t$. %, for all $v\in V$ and $t\in \mT$.
The distribution of recipient arrivals $p_{vt}$ is assumed to be known; this is a primary input to our matching algorithms.
We use $\hat{p}_{vt}$ to denote a \emph{realization} of recipient arrivals, which is $1$ if donor $v$ is available at time $t$ and $0$ otherwise.
We assume that realized recipient arrivals $\hat p_{vt}$ are revealed on each time step $t$.
In other words, at time step $t'$ all realized arrivals $\hat p_{vt}$ are known for time steps $t$ with $1\leq t \leq t'$. 

\paragraph{Donors:} 
After a donor signs up with the Facebook Blood Donation Tool, we say they are \emph{available} to receive notifications (i.e., to be matched) at any time.
While there is essentially no limit on the number of notifications that can be sent on via online platform, there is a legal limit on how often people can donate blood. 
This limit is meant to protect donor health, and is often set by local governments or health authorities.\footnote{Typically 8 weeks or longer; see \url{https://www.redcrossblood.org/faq.html}.}
Thus, due to legal and health considerations, and out of respect for donors' time and attention, we limit how often each donor is notified: this limit is one notification every $K\in \mathbb Z_+$ days.
Since not all notifications lead to donation, it is reasonable to set $K$ to 7 or 14 days---much shorter than the donation rate limit.

\paragraph{Balancing Priorities:}
In general there are several priorities when matching blood donors and recipients: we aim to increase the number of active blood donors, maximize the number of donations, respect donor privacy and preferences, satisfy recipients' needs, and so on. 
Deciding which of these policies is most important is a matter of policy, and is beyond the scope of this paper. 
Here we consider two priorities which we believe are relevant to any blood donor matching platform: (a) increasing the overall number of donations from a fixed donor pool, and (b) treating recipients equitably.
While the justification for priority (a) is perhaps obvious, priority (b) requires more discussion.

\subsection{Equitable Treatment of Recipients}\label{sec:fairness}
In an online blood donor matching platform, notification policies have a far greater potential to impact recipients than donors.
From a donor's perspective, a change in notification policy might mean that they receive notifications at a slightly different rate, or that they are encouraged to donate to a different recipient.
(Recall that donors can always browse for opportunities using the Blood Donation tool; they need not pay attention to notifications.)
However from a recipient's perspective, a change in notification policy can drastically impact the number of notifications encouraging donors to visit their facility.
For example if predictive models suggest that edge weights to centrally-located hospitals are high, while edge weights to rural hospitals are near zero, then a simple edge-weight-maximizing policy would never notify donors about rural hospitals (indeed we report a similar distance-based effect in Section~\ref{sec:results}).
Furthermore, two-sided matching platforms---such as the Facebook Blood Donation tool---are most effective when both sides of the market benefit from participating.
If donors are never notified about rural recipients then these recipients might choose to leave the platform, which is a strictly worse outcome for everyone.
For these reasons we consider the \emph{fairness} of different notification policies. % using a utility model for recipients.

Our approach is inspired by the problem of \emph{fair division} in economics~\cite{steihaus1948problem}, and specifically the notion of weighted proportional fair division~\cite{crowcroft1998differentiated}.
In weighted proportional fair division, a finite set of resources is divided among agents such that each agent values their allocation proportional to their \emph{weight}---where greater weight represents greater endowment or priority.
In our setting, different recipients have different numbers of compatible donors (e.g., due to their location), or different edge weights (e.g., due to donor preferences or recipient accessibility); it may not be reasonable to, for example, guarantee that each recipient is matched with the same total edge weight. 
Instead we endeavor to match each recipient with edge weight proportional to their \emph{normalization score}---where normalization scores are provided as input to the matching policy.
Furthermore, since individual edges cannot be divided between recipients, it is not always possible to guarantee exact \fairness{} for all recipients. 
Instead we use a relaxed notion of proportionality, based on the normalized edge weight matched with each recipient.

\begin{definition}[$\gamma$-\Fair{} Matching]
Let $Y_v$ be the total weight matched with recipient $v$ over time horizon $\mT$, and let $m_v$ be the normalization score for $v$.
This matching is $\gamma$-\fair{} for $\gamma \in (0, 1]$ if 
$$
\gamma \frac{Y_{v'}}{m_{v'}} \leq \frac{Y_{v}}{m_v} % \leq \frac{1}{\gamma}\frac{Y_{v'}}{m_{v'}} 
$$
for each $v, v'\in V$. 
% %
% If there is no $\gamma\in (0, 1]$ for which these constraints hold, we say $\gamma=0$.
\end{definition}
%
% An assignment is $\gamma$-fair if $U_{v}^s$ is within fraction $\gamma$ and $1/\gamma$ of the mean scaled utility over all recipients. 
%
In other words, a matching is $\gamma$-\fair{} if the normalized matched weight for recipient $v$ is at least fraction $\gamma\in (0, 1]$ of the normalized matched weight for all other recipients.
%
% This also implies that the scaled utility for $v$ is not \emph{more} than $1/\gamma$ times the scaled utility for any other recipient. 
%
Note that with $\gamma=1$, all recipients receive the same normalized matched weight.

%%%%%%%% BEGIN: MOVED TO PAPER FOR MANSCI
By this definition, it is always $\gamma$-\fair{} to allocate \emph{zero} matched weight to all recipients (i.e., $Y_v=0$ for all $v\in V$); we refer to this the \emph{empty} allocation.
We are interested in non-empty allocations; thus, one might wonder how hard it is to find \emph{any} $\gamma$-\fair{} allocation which matches at least one edge.
We refer to this as the $\gamma$-\fair{} allocation problem.
\begin{definition}[$\gamma$-\Fair{} Allocation Problem]
Input: $\gamma\in (0, 1]$, donation graph $G=(U,V,E)$, edge weights $w_e\in \mathbb R_+$ for each $e \in E$, and normalization scores $m_v\in \mathbb R_+$ for each $v\in V$.
All recipient availability is known ahead of time.
\emph{Does there exist a non-empty set of edges in $E'$, with $E'\subseteq E$, which covers each donor at most once, and is $\gamma$-\fair{} to all recipients?}
\end{definition}
\begin{theorem}\label{thm:hardness}
The $\gamma$-\fair{} allocation problem is NP-hard for every $\gamma \in (0, 1]$.
\end{theorem}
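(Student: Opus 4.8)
The plan is to reduce from a known NP-complete problem to the $\gamma$-\fair{} allocation problem. The core difficulty in finding a non-empty $\gamma$-\fair{} allocation is the tension between two constraints: each donor can be used at most once (a matching constraint), while \emph{every} recipient must simultaneously receive normalized matched weight within a factor $\gamma$ of every other recipient. Since the empty allocation is trivially $\gamma$-\fair{}, the hardness must come from the requirement that at least one edge be matched while keeping all recipients' normalized weights balanced. This ``all-or-nothing-per-recipient'' coupling, combined with the single-use constraint on donors, is what I would exploit. A natural source problem is \textbf{Exact Cover by 3-Sets (X3C)} or \textbf{3-Dimensional Matching}, because these encode a global simultaneous-satisfaction requirement through a shared pool of elements that can each be used once.

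First I would set up the reduction so that satisfying $\gamma$-\fairness{} for all recipients forces a combinatorial selection that solves the source instance. Concretely, for an X3C instance with universe elements and a family of $3$-element subsets, I would create one recipient $v$ for each universe element, one donor $u$ for each subset in the family, and connect donor $u$ to recipient $v$ exactly when element $v$ lies in subset $u$. I would choose the normalization scores $m_v$ and edge weights $w_e$ so that $Y_v/m_v$ can take a common nonzero value across \emph{all} recipients if and only if each recipient is covered exactly the right number of times by selected donors. Because a single donor (subset) covers three recipients (elements) at once, and donors may be used at most once, the matching constraint directly mirrors the disjointness requirement of an exact cover. The key design choice is to calibrate the weights and normalization scores so that the only way to give \emph{every} recipient a strictly positive, suitably balanced normalized weight is to select a subfamily that covers each element exactly once.

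The main step is arguing the equivalence in both directions. For the forward direction, from an exact cover I would select the corresponding donors, which by disjointness uses each donor at most once, and verify that each recipient receives matched weight exactly $m_v$ (after appropriate scaling), so that $Y_v/m_v$ is constant across recipients and the allocation is non-empty and $1$-\fair{} (hence $\gamma$-\fair{} for every $\gamma \le 1$). For the reverse direction, I would argue that any non-empty $\gamma$-\fair{} allocation must put positive weight on every recipient---otherwise some recipient with $Y_v = 0$ forces, via the definition $\gamma \, Y_{v'}/m_{v'} \le Y_v/m_v = 0$, that \emph{all} recipients have zero weight, contradicting non-emptiness. This observation is the crux: $\gamma$-\fairness{} with any positive $\gamma$ cannot tolerate a single starved recipient once the allocation is non-empty, so every element must be covered, and the matching constraint then forces an exact cover.

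The hard part will be making the quantitative calibration of $m_v$ and $w_e$ genuinely force an \emph{exact} cover rather than merely a cover, since a plain covering reduction only needs every element hit at least once. The obstacle is that a recipient could be over-covered (matched by several donors), inflating its $Y_v$ and potentially still satisfying the $\gamma$-ratio constraints if the spread stays within the factor $\gamma$. To close this gap I would either (i) exploit the donor single-use constraint together with a counting argument showing that the total available weight is exactly enough to cover each recipient once and no more, or (ii) set the weights so that any over-coverage of one recipient necessarily starves another (a conservation-of-weight argument driven by the fixed number of donors). Handling this calibration carefully---and confirming it works uniformly for \emph{every} fixed $\gamma \in (0,1]$ rather than just $\gamma = 1$---is where the real work lies; I expect the $\gamma$-uniformity to follow because the reduction produces a perfectly balanced ($1$-\fair{}) solution exactly when a solution exists, which is simultaneously $\gamma$-\fair{} for all smaller $\gamma$, while non-existence of an exact cover rules out \emph{any} non-empty balanced-enough allocation regardless of $\gamma$.
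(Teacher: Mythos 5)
Your proposal contains a genuine structural flaw: the X3C reduction is incompatible with the problem's matching constraint. In the $\gamma$-proportional allocation problem, the selected edge set $E'$ must cover each \emph{donor} at most once, i.e., each donor can contribute weight to at most one recipient. Your reduction hinges on the claim that ``a single donor (subset) covers three recipients (elements) at once,'' but that is exactly what feasibility forbids: a donor corresponding to a $3$-set has three edges, of which at most one may be selected. Consequently your forward direction already fails---given an exact cover by $q$ subsets, selecting those donors matches only $q$ of the $3q$ element-recipients; the remaining $2q$ recipients receive $Y_v=0$, and (by your own crux observation) the allocation cannot be $\gamma$-proportional for any $\gamma>0$ unless it is empty. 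The structure this problem actually supports is that of \emph{number partitioning}: donors are items, each assigned to a single bin, and the recipients' normalized loads must be balanced. This is precisely why the paper reduces from $k$-EQUAL-SUM-SUBSET for the case $\gamma=1$: donor $i$ carries weight $x_i$ on every edge of a complete bipartite graph with $k$ recipients and unit normalization scores, so a non-empty $1$-proportional allocation is exactly a collection of $k$ disjoint non-empty equal-sum subsets.

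A second gap is your treatment of $\gamma<1$. You hope ``$\gamma$-uniformity follows'' because a perfectly balanced solution is $\gamma$-proportional for every $\gamma\le 1$; that handles only one direction. In the reverse direction with small $\gamma$, imbalanced allocations are also feasible, so non-existence of a perfectly balanced allocation does not rule out non-empty $\gamma$-proportional ones---this is not a calibration detail but requires a dedicated gadget. The paper handles it with a separate reduction from PARTITION: a special donor and recipient, adjacent \emph{only to each other} via an edge of weight $\sum_i x_i/(2\gamma)$, must be matched in any non-empty $\gamma$-proportional allocation (your crux observation, applied correctly), and the $1/\gamma$ scaling of that edge then forces each of the other two recipients to receive weight at least $\gamma\cdot\sum_i x_i/(2\gamma)=\sum_i x_i/2$, hence exactly $\sum_i x_i/2$, i.e., an equal-sum partition. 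Your crux observation---that a single starved recipient forces the whole allocation to be empty---is correct and is indeed the lever the paper pulls, but it must be paired with a source problem whose combinatorics respect the one-edge-per-donor constraint; X3C does not.
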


In other words, it is intractable to identify a $\gamma$-proportional allocation when recipient availability is known.
Furthermore, recipient availability is often unknown: some recipients may host regular week-long blood drives, and others may only accept donation in response to patient needs. 
%
%%%%%%%% END: MOVED TO PAPER FOR MANSCI
%
%
Instead we focus on \fairness{} in \emph{expectation}---over all possible realizations of recipient availability.

\section{Matching Policies}\label{sec:matching-policies}

We aim to match donors with recipients such that we maximize edge weight (maximize the number of MAs), such that the outcome is $\gamma$-\fair{} for recipients. 
Here we define matching policies which trade off between both of these goals.
These policies assume that donor availability is \emph{fixed}, that is, we are given as input the time steps in which each donor can be notified.
This is a natural constraint for fielded notification systems, which may only notify donors, for example, on certain days of the week.
In the Electronic Companion (\ref{app:rate-limit}) we briefly discuss policies which also select \emph{when} to notify each donor.

Each matching policy takes as input a bipartite graph $G=(U, V, E)$ with edge weights $w_{et}$, normalization scores $m_v$, recipient arrival distribution $p_{vt}$, and time horizon $\mT$.
At each time step $t$, all observed demand realizations $\hat p_{vt'}$ for all $t' \leq t$ are ``revealed'' to the policy, and may be used as input.

We use parameters $a_{ut}$ to denote the (exogenous) donor availability on each time step: donor $u$ may be matched on time step $t$ only if $a_{ut}=1$.
We denote the set of available edges for recipient $u$ on time $t$ by $E_{u:}^t\equiv \{(u', v')\in E \mid u'=u, a_{ut}=\hat p_{v't}=1\}$.

In order to benchmark practical matching policies, we compare them with an unrealistic \emph{offline optimal} policy, which has complete knowledge of the ``true'' demand realization $\hat p_{vt}$. 
The offline optimal policy is defined using any optimal solution to Problem~\ref{eq:fixedtime-offline-opt}.
\begin{equation}
\label{eq:fixedtime-offline-opt}
    \begin{array}{rll}
        \max & \sum\limits_{t\in \mT}\sum\limits_{e\in E} w_{et} x_{et}\\
        \text{s.t.} & x_{et} \in \{0,1\} & \forall e\in E\; t\in \mT \\
        & s_{v} \in \mathbb{R} &\forall v\in V\\
        & x_{et} \leq \hat p_{vt} a_{ut} & \forall e=(u,v) \in E, \,t\in \mT\\
        & \sum\limits_{e\in E_{u:}^t} x_{et} \leq a_{ut} & \forall u\in U,\, t\in \mT \\
        & s_{v} = \frac{1}{m_{v}} \sum\limits_{t\in \mT}\sum\limits_{e\in E^t_{:v}} x_{et} w_{et} & \forall v\in V\\
        & \gamma s_{v} \leq s_{v'} & \forall v,v'\in V,\, v\neq v'.\\
        % & s_{v} \geq \gamma  s_{v'} & \forall v,v'\in V,\, v\neq v'.\\
    \end{array}
\end{equation}
Here variables $x_{et}$ are $1$ if edge is matched at time $t$ and $0$ otherwise; auxiliary variables $s_v$ denote the normalized matched weight for recipient $v$. 
An offline optimal policy for this setting is defined using an optimal solution to Problem~\ref{eq:fixedtime-offline-opt}.
\begin{definition}[Offline Optimal Policy \OPT{$\gamma$}]
Let $x_{et}^*$ be an optimal solution to Problem~\ref{eq:fixedtime-offline-opt}, for demand realization $\hat p_{et}$.
At each time $t\in \mT$, \OPT{$\gamma$} matches all edges $e \in E$ such that $x_{et}^*=1$.
Policy \OPT{$0$} refers to the offline-optimal matching policy without \fairness{} constraints.
\end{definition}
\begin{corollary}
It is NP-hard to identify policy \OPT{$\gamma$}, for every $\gamma\in (0, 1]$.
\end{corollary}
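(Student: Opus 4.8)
The plan is to derive this corollary directly from Theorem~\ref{thm:hardness} by a polynomial-time Turing reduction: I will argue that any procedure identifying \OPT{$\gamma$} could be used to decide the $\gamma$-\fair{} allocation problem, which is NP-hard. The key conceptual point is that the $\gamma$-\fair{} allocation problem is exactly a \emph{feasibility} question embedded inside the optimization in Problem~\ref{eq:fixedtime-offline-opt}, so solving the optimization necessarily answers that feasibility question.

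First I would embed an arbitrary instance of the $\gamma$-\fair{} allocation problem—graph $G=(U,V,E)$, weights $w_e$, scores $m_v$, and parameter $\gamma$, with all availability known in advance—into an offline matching instance over a single time step $\mT=\{1\}$. Writing $x_e$ for $x_{e1}$, I set $w_{e1}=w_e$ for every $e\in E$, set $\hat p_{v1}=1$ for every $v\in V$ (all recipients available), and set $a_{u1}=1$ for every $u\in U$ (all donors available). Under this embedding the constraints of Problem~\ref{eq:fixedtime-offline-opt} collapse to: each donor is covered at most once ($\sum_{e\in E_{u:}} x_e \le 1$), $s_v=\frac{1}{m_v}\sum_{e\in E_{:v}} x_e w_e$, and $\gamma s_v \le s_{v'}$ for all $v,v'$. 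That is, the feasible region of the optimization coincides exactly with the set of $\gamma$-\fair{} allocations that cover each donor at most once—precisely the objects counted by the allocation problem.

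The crux is then to relate the optimal \emph{value} to the existence of a non-empty allocation. Because the empty allocation ($x\equiv 0$) is always feasible with objective $0$, and because all weights are non-negative, the optimal objective value of Problem~\ref{eq:fixedtime-offline-opt} is strictly positive if and only if some $\gamma$-\fair{} allocation matches at least one edge. Consequently, given an optimal solution $x^*$ returned by any procedure that identifies \OPT{$\gamma$}, I would compute $\sum_{e\in E} w_e x^*_e$ and answer ``yes'' to the allocation problem exactly when this value exceeds $0$. The embedding and this final check are both polynomial in the input size, so a polynomial-time algorithm for identifying \OPT{$\gamma$} would decide the $\gamma$-\fair{} allocation problem in polynomial time.

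I do not anticipate a genuine obstacle, since all the combinatorial hardness is already isolated in Theorem~\ref{thm:hardness}; the only steps requiring care are verifying that the single-time-step embedding faithfully reproduces the allocation problem's constraints (including the reduction of the covering and \fairness{} constraints) and confirming that non-negativity of the weights makes ``positive optimal value'' equivalent to ``non-empty feasible allocation.'' With these checks in place, the NP-hardness established in Theorem~\ref{thm:hardness} transfers to the task of identifying \OPT{$\gamma$} for every $\gamma\in(0,1]$, completing the proof.
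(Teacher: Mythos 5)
Your proposal is correct and is essentially the paper's own argument: the paper proves this corollary in one line, asserting it follows ``as a direct corollary'' of Theorem~\ref{thm:hardness} because Problem~\ref{eq:fixedtime-offline-opt} embeds the $\gamma$-\fair{} allocation problem, and your single-time-step embedding with the ``optimal value $>0$ iff a non-empty $\gamma$-\fair{} allocation exists'' test is precisely the reduction being implicitly invoked. One small point of care: that equivalence requires strictly positive edge weights (a non-empty allocation of zero-weight edges is $\gamma$-\fair{} but has objective $0$), which is harmless here since the hard instances constructed in the proof of Theorem~\ref{thm:hardness} use positive integer weights.
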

As a direct corollary of Theorem~\ref{thm:hardness}, Problem~\ref{eq:fixedtime-offline-opt} is NP-hard for every $\gamma\in (0, 1]$.
Thus, even if the demand realization is known, it is computationally hard to find an optimal matching.
Of course, in realistic settings the demand realization is not known.
Instead, our proposed policies use distributional information (exogenous parameters $p_{vt}$) to match donors and recipients. 
We compare these realistic policies to $\texttt{OPT}(\gamma)$ using two evaluation metrics:

\noindent\textbf{Competitive Ratio. } 
Let $E[\OPT{0}]$ be the expected matched weight by $\texttt{OPT}(0)$, over all demand realizations. 
Let $E[\texttt{ALG}]$ be the expected matched weight by matching policy \texttt{ALG}, over all demand realizations and (if \texttt{ALG} is stochastic) all policy realizations. The competitive ratio is 
$$ CR \equiv \min_{G, p, a} \frac{E[\texttt{ALG}]}{E[\OPT{0}]},$$
where the minimization is over all possible matching graphs, demand distributions, and donor availability.
In other words, $CR$ is the \emph{worst-case} ratio of expected matching weight over all possible matching scenarios.

\noindent\textbf{Expected Proportionality. }
Let $E[Y_v]$ be the expected weight matched by an a matching policy, over all demand realizations and (if \texttt{ALG} is stochastic) all policy realizations.
The expected \fairness{} of policy \texttt{ALG} is 
$$ EP \equiv \min_{G, p, a} \max\limits_{\gamma \in [0, 1]} \{\gamma E[Y_v] / m_v \leq E[Y_{v'}] / m_{v'} \,\, \forall (v, v') \in V, \,\, v \neq v' \}, $$
where as before $m_v$ is a fixed normalization score for recipient $v$, and the minimization is over all possible graphs, demand distributions, and donor availability.
In other words, if policy \texttt{ALG} is guaranteed to be $\gamma$-\fair{} in expectation then $EP=\gamma$.
Note that $EP$ may be $0$, meaning that there is no $\gamma>0$ such that the expected outcome is $\gamma$-\fair{}.
%
% Note that expected \fairness{} $EP$ also depends on the choice of normalization scores $m_v$.

For the remainder of this section we assume that agent normalization scores are determined by a uniform random notification policy, defined below.
% %
\begin{definition}[Uniform Random Policy \RAND{} (fixed-time)]
% %
At each time step $t\in \mT$, for each available donor $u$: \RAND{} matches $u$ using an edge in $E^t_{u:}$ chosen uniformly at random.
\end{definition}
% %
% We define normalization scores $m_v$ in terms of the expected outcome from \RAND{}:
%
\begin{definition}[Normalization Score $m_v$]
Let $E[Y_v]$ be the expected weight matched with recipient $v$, over all recipient demand realizations and (for randomized policies) over all policy realizations. 
The scaling factor for recipient $v$ is $m_v \equiv E[U_v]$.
\end{definition}
% %
Using these normalization scores we imply that policy \RAND{}, and its outcome, are ``fair''; we emphasize that this is only one choice of normalization scores, and in practice the notion of fairness/proportionality should be defined by stakeholders.

Metrics $CR$ and $EP$ help us characterize the expected performance of fixed-time matching algorithms.
In the following two sections we analyze two classes of policies: \emph{myopic} policies use only information from the current time step to make matching decisions (this includes both policies implemented in our online experiments); \emph{non-myopic} policies take into account the demand distribution for future time steps.

\paragraph{Myopic Policies} only take into account the information available at each time step. 
We consider two simple baseline myopic policies, \MAX{} and \RAND{} (defined above).
Policy \MAX{} is defined below.
%
% \begin{definition}[Uniform Random Policy \RAND{}]
% %
% At each time step $t\in \mT$, for each available donor $u$: \RAND{} matches $u$ using an edge in $E^t_{u:}$ chosen uniformly at random.
% %
% \end{definition}
%
\begin{definition}[Max-Weight Policy \MAX{}]
At each time step $t\in \mT$, for each available donor $u$: let $W\equiv \max_{e\in E^t_{u:}} w_{et}$ be the maximum edge weight for any of $u$'s available edges at time $t$. \MAX{} matches $u$ using any edge in $E^t_{u:}$ with edge weight $W$, and if multiple edges have weight $W$ then one is chosen randomly.
\end{definition}

First, note that \RAND{} has $EP=1$ by definition.
%$1$-\fair{} by definition (with $EP=1$). 
%
On the other hand, \MAX{} does not.
%
% \begin{lemma}
% \RAND{} is $1$-\fair{} in expectation, $EP=1$.
% \end{lemma}
% %
\begin{lemma}\label{lem:fixedtime-max-0-fair}
\MAX{} is $EP=0$; that is, in the worst case \MAX{} is $0$-\fair{} in expectation.
\end{lemma}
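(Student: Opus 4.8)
The plan is to exploit the fact that $EP$ is defined as a \emph{minimum} over all instances $(G, p, a)$: to prove $EP = 0$ it suffices to exhibit a single instance on which the largest admissible $\gamma$ for the expected-\fairness{} inequalities is $0$. I would therefore construct the smallest possible ``rural hospital'' instance, directly mirroring the motivating example of \Secref{sec:fairness} where an edge-weight-maximizing policy never notifies donors about a low-weight recipient.

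Concretely, I would take a single time step ($T = 1$), a single donor $u$ with $a_{u1} = 1$, and two static recipients $v_1, v_2 \in S$, so that both are available with probability $1$ ($p_{v_1 1} = p_{v_2 1} = 1$). I connect $u$ to both recipients, with edge weights $w_1 > w_2 > 0$. Because there is a single donor and a single period, the rate limit and all availability constraints are trivially satisfied, so this is a legitimate instance over which the minimum in the definition of $EP$ ranges.

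Next I would compute the two families of quantities entering the $EP$ expression. Under \MAX{}, donor $u$ is matched deterministically along its heaviest available edge, namely the one into $v_1$; hence $E[Y_{v_1}] = w_1 > 0$, while $v_2$ is never matched and so $E[Y_{v_2}] = 0$ exactly (the strict inequality $w_1 > w_2$ removes any tie-breaking). For the normalization scores, recall that $m_v$ is the expected matched weight under \RAND{}, which here matches $u$ to each recipient with probability $\tfrac{1}{2}$; thus $m_{v_1} = \tfrac{1}{2} w_1 > 0$ and $m_{v_2} = \tfrac{1}{2} w_2 > 0$, so both normalized ratios are well-defined and the empty-allocation degeneracy is avoided.

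Finally I would read off the conclusion from the expected-\fairness{} inequality applied to the ordered pair $(v_1, v_2)$, which requires $\gamma\, E[Y_{v_1}]/m_{v_1} \leq E[Y_{v_2}]/m_{v_2} = 0$. Since $E[Y_{v_1}]/m_{v_1} = 2 > 0$, this can hold only for $\gamma = 0$; as $\gamma = 0$ trivially satisfies every inequality, the inner maximum over $\gamma \in [0,1]$ equals exactly $0$ on this instance, and therefore $EP = 0$. The only point requiring care is bookkeeping against the precise definitions: I must confirm that the normalization scores are strictly positive (so the ratios are meaningful) and that the one-donor, one-period graph genuinely respects donor availability and the rate limit. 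Both are immediate here, so I expect no real obstacle beyond aligning the construction with the exact form of the $EP$ expression.
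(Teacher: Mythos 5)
Your proposal is correct and follows essentially the same argument as the paper: a one-donor, two-recipient, single-time-step instance in which \MAX{} deterministically matches the heavier edge, so the other recipient has $E[Y_v]=0$ while its \RAND{}-based normalization score is strictly positive, forcing $\gamma=0$ in the expected-\fairness{} inequality and hence $EP=0$ by the minimum over instances. The only difference is cosmetic (general weights $w_1 > w_2 > 0$ and explicit bookkeeping of the normalization scores, versus the paper's concrete weights $1.0$ and $0.9$).
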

Intuitively \MAX{} ignores normalization weights $m_v$, meaning that it does not guarantee proportionality. 
In the worst case, \MAX{} can leave some recipients can unmatched, meaning that $EP=0$.
On the other hand, \MAX{} \emph{always} maximizes matched weight.
\begin{lemma}\label{lem:max-opt-fixedtime}
\MAX{} achieves competitive ratio $CR= 1$.
Further, without \fairness{} constraints ($\gamma=0$), \MAX{} is equivalent to an offline-optimal policy (\OPT{$0$}).
\end{lemma}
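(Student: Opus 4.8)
The plan is to exploit the fact that, once the \fairness{} constraints are dropped ($\gamma=0$), Problem~\ref{eq:fixedtime-offline-opt} contains no constraint coupling different time steps or different donors. With the row $\gamma s_v \le s_{v'}$ reduced to $s_{v'}\ge 0$ (automatically satisfied, since every $s_v$ is a nonnegative combination of the nonnegative weights $w_{et}$), the auxiliary variables $s_v$ become inert, and the only surviving constraints are $x_{et}\le \hat p_{vt}a_{ut}$ and $\sum_{e\in E^t_{u:}} x_{et}\le a_{ut}$, each indexed by a single donor $u$ and a single time step $t$. First I would therefore argue that the objective $\sum_{t}\sum_e w_{et}x_{et}$ separates into independent subproblems, one per pair $(u,t)$ with $a_{ut}=1$: in each such subproblem one may set exactly one $x_{et}=1$ among the available edges $E^t_{u:}$, or none.

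Next I would solve each subproblem in closed form. For fixed $(u,t)$ the contribution to the objective is maximized by choosing the available edge of largest weight, giving value $\max_{e\in E^t_{u:}} w_{et}$ when $E^t_{u:}\neq\emptyset$ and $0$ otherwise; summing over all available $(u,t)$ yields the optimal value of Problem~\ref{eq:fixedtime-offline-opt} at $\gamma=0$ for the given realization $\hat p$. I would then observe that this is precisely the decision rule of \MAX{}: at each $t$ and for each available $u$ it selects an edge attaining $W=\max_{e\in E^t_{u:}} w_{et}$. Because realized arrivals $\hat p_{vt}$ are revealed at time $t$, the set $E^t_{u:}$ is known to \MAX{} when it acts, so \MAX{} is online-feasible and attains the per-subproblem optimum; hence on every realization $\hat p$ the total weight matched by \MAX{} equals the optimal value of \OPT{0}. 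Any tie-breaking among weight-$W$ edges leaves this total unchanged, so the matched weight is identical across all policy realizations of \MAX{}. This shows \MAX{} is itself an optimal solution to Problem~\ref{eq:fixedtime-offline-opt} with $\gamma=0$, i.e.\ an instance of \OPT{0}.

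Finally, taking expectations over demand realizations turns the per-realization identity into $E[\MAX{}]=E[\OPT{0}]$ for every graph $G$, distribution $p$, and availability $a$; dividing gives ratio $1$ on each instance, so $CR=\min_{G,p,a} E[\MAX{}]/E[\OPT{0}]=1$.

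The step I expect to require the most care is the decomposition claim of the first paragraph: confirming that in this fixed-time model there is genuinely no inter-temporal coupling for the policy to respect. One must note that the $K$-day rate limit is \emph{not} a constraint inside Problem~\ref{eq:fixedtime-offline-opt}; it is absorbed into the exogenous availability parameters $a_{ut}$, which both \MAX{} and \OPT{0} take as given. Were a matched edge to render a donor unavailable on subsequent days (as in the rate-limited policies deferred to the appendix), the subproblems would couple across time and the greedy argument would break; so the result hinges on the modeling assumption that donor availability is fixed and independent of past matching decisions.
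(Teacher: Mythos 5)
Your proposal is correct and takes essentially the same approach as the paper's own proof: both arguments rest on the observation that, without \fairness{} constraints, Problem~\ref{eq:fixedtime-offline-opt} decomposes into independent subproblems, one per donor--time-step pair, each solved greedily by matching a maximum-weight available edge, which is exactly the rule \MAX{} follows. The paper phrases this as a short proof by contradiction and leaves the expectation step for $CR=1$ and the exogenous-availability caveat implicit, whereas you make both explicit --- a difference of presentation, not of method.
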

On the other hand, since \RAND{} ignores edge weight, its worst-case competitive ratio is low.
\begin{lemma}\label{lem:cr-rand}
\RAND{} achieves a competitive ration of at most $CR=1/N$ when there are $N$ recipients.
\end{lemma}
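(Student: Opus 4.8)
The plan is to prove the bound by exhibiting a single worst-case family of instances. Recall that $CR$ is defined as a \emph{minimum} over all graphs, demand distributions, and donor availabilities; therefore any one instance with $N$ recipients on which the ratio $E[\RAND]/E[\OPT{0}]$ equals $1/N$ immediately certifies that $CR \le 1/N$. So I do not need a general argument, only a good construction.

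First I would build a minimal ``star'' instance. Take a single time step, $\mT = \{1\}$; a single donor $u$ with $a_{u1} = 1$; and $N$ recipients $v_1,\dots,v_N$, each available deterministically ($p_{v_i1} = 1$), so that there is exactly one demand realization and every expectation over realizations collapses. Let $u$ be compatible with all $N$ recipients (the complete bipartite graph on one donor and $N$ recipients), and assign edge weights $w_{(u,v_1)1} = 1$ and $w_{(u,v_i)1} = 0$ for $i \ge 2$ (equivalently, set the latter weights to some $\epsilon > 0$ and let $\epsilon \to 0$). Next I would evaluate both policies. Since $\gamma = 0$ imposes no \fairness{} constraint, Lemma~\ref{lem:max-opt-fixedtime} identifies \OPT{0} with the max-weight matching, which matches $u$ along its heaviest edge $(u,v_1)$ for weight $1$; with a single deterministic realization this gives $E[\OPT{0}] = 1$. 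Policy \RAND{}, by definition, matches $u$ to an edge of $E^1_{u:}$ chosen uniformly at random, and since all $N$ edges are available it selects $(u,v_1)$ with probability $1/N$ and a zero-weight edge otherwise, so $E[\RAND] = \tfrac{1}{N}\cdot 1 + \tfrac{N-1}{N}\cdot 0 = \tfrac{1}{N}$ (or $\tfrac1N + \tfrac{N-1}{N}\epsilon \to \tfrac1N$ in the $\epsilon$ variant).

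Combining, the ratio on this instance is $E[\RAND]/E[\OPT{0}] = 1/N$, and because $CR$ minimizes over all instances we conclude $CR \le 1/N$. I expect no substantive obstacle here; the only points requiring care are bookkeeping ones. The hard part, such as it is, will be confirming that the construction respects every constraint of the fixed-time model---the once-per-$K$-days notification limit is vacuous on a single time step, and recipient availability is deterministic so the expectation over realizations is trivial---and in cleanly invoking Lemma~\ref{lem:max-opt-fixedtime} to pin down the value of \OPT{0} as exactly $1$, so that the ratio is unambiguous.
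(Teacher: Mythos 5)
Your proposal is correct and matches the paper's proof essentially step for step: both construct the single-donor, single-time-step star with $N$ recipients, one edge of weight $1$ and the rest of weight $\epsilon \to 0$, invoke Lemma~\ref{lem:max-opt-fixedtime} to identify \OPT{$0$} with \MAX{} (value $1$), and compute the expected weight of \RAND{} as $1/N$. No differences worth noting.
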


Baseline policies \MAX{} and \RAND{} represent two ends of a spectrum: on one side, \MAX{} prioritizes maximizing edge weight, at the cost of \fairness{} for recipients; on the other side, \RAND{} treats all recipients ``fairly'' (for one specific notion of fairness), but does not prioritize edge weights.
To balance these objectives in a principled way, we might randomly choose between \MAX{} and \RAND{} at each time step, for each donor. 
This is the purpose of myopic policy \texttt{RandMax}, defined below.
\begin{definition}[Hybrid Policy \texttt{RandMax}($gamma$)]
At each time step $t\in \mT$, and for each available donor $u\in U$, this policy randomly chooses to (a) match the donor using policy \MAX{} (with probability $1-\gamma$), or (b) match the donor using policy \RAND{} (with probability $\gamma$).
\end{definition}

Since this policy randomly mixes \MAX{} (which is equivalent to an offline-optimal policy with $\gamma=0$), and \RAND{} (which is a ``perfectly'' \fair{} policy in this setting), this hybrid policy effectively balances the objectives of maximizing matched weight and \fairness{} for recipients.
\begin{lemma}
\texttt{RandMax}($\gamma$) has $CR= 1-\gamma$ and $EP =\gamma$, for all $\gamma\in [0, 1]$.
\end{lemma}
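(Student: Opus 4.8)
The plan is to analyze \texttt{RandMax}($\gamma$) by leveraging the fact that it is an independent per-donor, per-timestep mixture of two policies whose behavior is already fully characterized: \MAX{} (with $CR=1$ and equivalent to \OPT{$0$} by Lemma~\ref{lem:max-opt-fixedtime}) and \RAND{} (with $EP=1$ by definition). The key structural observation is that matched weight is additive over the (donor, timestep) matching decisions, so expectations decompose cleanly. First I would establish the competitive ratio claim $CR=1-\gamma$. For a fixed realization of demand and donor availability, condition on which policy is invoked for each donor at each timestep. Because each donor at each timestep independently uses \RAND{} with probability $\gamma$ and \MAX{} with probability $1-\gamma$, the expected matched weight (over the policy's internal randomness) contributed by donor $u$ at time $t$ equals $(1-\gamma)$ times the weight \MAX{} would match plus $\gamma$ times the expected weight \RAND{} would match. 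Summing over all $(u,t)$ and taking expectation over demand realizations, by linearity, gives
$$
E[\texttt{RandMax}(\gamma)] = (1-\gamma)\,E[\MAX{}] + \gamma\,E[\RAND{}].
$$
Since $E[\MAX{}] = E[\OPT{0}]$ (Lemma~\ref{lem:max-opt-fixedtime}) and $E[\RAND{}] \geq 0$, dividing through by $E[\OPT{0}]$ yields $CR \geq 1-\gamma$; the worst case is attained on an instance where \RAND{} matches essentially zero weight relative to \OPT{$0$} (the same family of instances witnessing Lemma~\ref{lem:cr-rand}), forcing $CR = 1-\gamma$ exactly.

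Next I would establish $EP=\gamma$. The idea is to relate the expected matched weight per recipient under \texttt{RandMax}($\gamma$) to the normalization scores $m_v$. By the same mixture decomposition applied per recipient, $E[Y_v] = (1-\gamma)\,E[Y_v^{\MAX{}}] + \gamma\,E[Y_v^{\RAND{}}]$. By the definition of the normalization score, $E[Y_v^{\RAND{}}] = m_v$ (the normalization scores are defined precisely so that \RAND{} matches weight $m_v$ in expectation). Hence $E[Y_v] \geq \gamma\,m_v$, since the \MAX{} contribution is nonnegative. Then for any pair $v,v'$,
$$
\frac{E[Y_v]}{m_v} \geq \gamma \geq \gamma\,\frac{E[Y_{v'}]/m_{v'}}{E[Y_{v'}]/m_{v'}}
$$
needs to be turned into the correct proportionality inequality: I would show $\gamma\,E[Y_{v'}]/m_{v'} \leq E[Y_v]/m_v$ by bounding the left side above using $E[Y_{v'}] \geq \gamma m_{v'}$ together with an upper bound on the \MAX{} contribution, which requires a small argument that the \RAND{} term alone guarantees $E[Y_v]/m_v \geq \gamma$ uniformly. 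The lower bound $E[Y_v]/m_v \geq \gamma$ for every $v$ is in fact the crux: it gives $EP \geq \gamma$ immediately, since $\gamma\,E[Y_{v'}]/m_{v'} \leq \gamma \cdot (\text{its value})$ is handled by noting the normalized weights all exceed $\gamma$ while the tightness comes from a worst-case instance.

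The main obstacle I anticipate is the tightness direction for $EP$, i.e.\ exhibiting an instance where the expected proportionality is no better than $\gamma$. Matching weight from the \MAX{} component can only help some recipients while contributing nothing to others, so one must construct a graph where \MAX{} funnels all its weight to a single recipient $v'$ while a starved recipient $v$ receives only its \RAND{} share $\gamma m_v$; on such an instance $E[Y_v]/m_v = \gamma$ while $E[Y_{v'}]/m_{v'} = 1$ (or larger), driving the achievable proportionality down to exactly $\gamma$. Care is needed to ensure the construction simultaneously realizes the worst case for $CR$ and for $EP$, or to present two separate witnessing instances; I would handle each metric with its own instance to keep the argument clean. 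The upper bounds ($CR \leq 1-\gamma$ and $EP \leq \gamma$) follow from these constructions, while the matching lower bounds follow from the linearity-of-expectation decomposition above.
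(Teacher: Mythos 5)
Your competitive-ratio argument is correct. Since recipients have no capacity constraints and donor availability $a_{ut}$ is exogenous in the fixed-time setting, matching decisions decouple across (donor, time-step) pairs; conditioning on the coin flips and applying linearity of expectation gives $E[\texttt{RandMax}(\gamma)] = (1-\gamma)E[\MAX{}] + \gamma E[\RAND{}] \geq (1-\gamma)E[\OPT{0}]$ by Lemma~\ref{lem:max-opt-fixedtime}, and on the star instances of Lemma~\ref{lem:cr-rand} the ratio is $(1-\gamma) + \gamma/N + O(\epsilon)$, whose infimum over $N$ and $\epsilon$ is exactly $1-\gamma$. For what it is worth, the paper states this lemma with no proof at all (it is the one result missing from the appendix), so there is no official argument to compare against; yours is presumably the intended one, and it matches the paper's level of rigor about approaching versus attaining the worst case.

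The $EP$ half, however, has a genuine gap---precisely where you sensed trouble. Your decomposition yields only the one-sided bound $E[Y_v] = (1-\gamma)E[Y_v^{\MAX{}}] + \gamma m_v \geq \gamma m_v$ for every $v$, but the paper's definition of $EP$ requires the pairwise condition $\gamma\, E[Y_{v'}]/m_{v'} \leq E[Y_v]/m_v$, which additionally needs an \emph{upper} bound on the largest normalized allocation. No such bound holds, because the \MAX{} component concentrates weight: take one donor, one time step, and $N$ recipients with edge weights $1, \epsilon, \dots, \epsilon$. Then $m_1 = 1/N$ and $m_j = \epsilon/N$, and under \texttt{RandMax}($\gamma$) one gets $E[Y_1]/m_1 = N(1-\gamma)+\gamma$ while $E[Y_j]/m_j = \gamma$ for $j \geq 2$, so the largest feasible proportionality factor is $\gamma/\bigl(N(1-\gamma)+\gamma\bigr)$, which is already strictly below $\gamma$ at $N=2$ and tends to $0$ as $N \to \infty$. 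Hence under the literal definition \texttt{RandMax}($\gamma$) has worst-case $EP = 0$ for every $\gamma < 1$, and no correct proof of $EP=\gamma$ can exist; the claim is true only under the weaker, one-sided reading ``each recipient receives in expectation at least a $\gamma$ fraction of its \RAND{} share,'' i.e.\ $E[Y_v] \geq \gamma m_v$, which your first inequality proves in one line (with tightness witnessed by any recipient that \MAX{} never matches). Your proposed patch---upper-bounding the left-hand side ``using $E[Y_{v'}] \geq \gamma m_{v'}$''---cannot work, since that is a lower bound where an upper bound is needed, and the worst-case instance you hoped would pin $EP$ at exactly $\gamma$ in fact drives it to $0$.
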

In other words, this hybrid policy strikes a balance between matched weight ($CR$) and \fairness{} ($EP$), set by parameter $\gamma$.
%
% These results are somewhat optimistic: when donor availability is exogenous, then a simple randomized algorithm (\texttt{RandMax}($\gamma$)) is equivalent to \MAX{} (with $\gamma=0$) or \RAND{} (with $\gamma=1$).
%
However this hybrid policy may not be Pareto optimal: for $\gamma\in (0, 1)$ there may be another policy with stronger guarantees on both \fairness{} $EP$ and competitive ration $CR$.

We leave the task of identifying a Pareto optimal policy to future work; instead we propose a class of stochastic policies with moderate guarantees on $CR$ and $EP$, though their performance is far better than these guarantees in computational experiments.

The policies introduced in this section are based on the optimal solution to an LP formulation of our matching problem. 
As a baseline for these policies we use an LP relaxation of the offline optimal MILP, Problem~\ref{eq:fixedtime-offline-opt}.
We refer to this relaxation as Problem~\ref{eq:fixedtime-offline-opt}-LP (not stated explicitly). This problem is nearly identical to Problem~\ref{eq:fixedtime-offline-opt}, with two differences: (1) variables $x_{et}$ are continuous (on interval $[0, 1]$) rather than binary, and (2) demand realization $\hat p_{vt}$ is replaced by demand distribution $p_{vt}$.

Before defining matching policies based on Problem~\ref{eq:fixedtime-offline-opt}-LP, we make some important observations.
First, Problem~\ref{eq:fixedtime-offline-opt}-LP yields a valid upper bound for Problem~\ref{eq:fixedtime-offline-opt}

\begin{lemma}\label{lem:lp-ub}
Let $Z_{LP}$ denote the optimal objective of Problem~\ref{eq:fixedtime-offline-opt}-LP for a matching problem defined by $U,V,E,m_v,p_{vt},\mT$ and $\gamma \in [0, 1]$. 
Let $E[\OPT{\gamma}]$ be the expected objective of the offline-optimal policy, over all demand realizations.
Then, $ Z_{LP} \geq E[\OPT{\gamma}]$.
\end{lemma}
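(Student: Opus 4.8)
The plan is to exhibit a single feasible solution to Problem~\ref{eq:fixedtime-offline-opt}-LP whose objective value equals $E[\OPT{\gamma}]$; since $Z_{LP}$ is by definition the \emph{maximum} LP objective, this immediately gives $Z_{LP}\ge E[\OPT{\gamma}]$. The feasible solution I would use is the \emph{average} of the offline-optimal integral matchings over all demand realizations. Concretely, for each realization $\hat p$ let $x^{*}(\hat p)$ be an optimal integral solution to Problem~\ref{eq:fixedtime-offline-opt} (this is exactly the matching played by \OPT{$\gamma$} on that realization), and define the averaged variables $\bar x_{et}\equiv E_{\hat p}\!\left[x^{*}_{et}(\hat p)\right]$ together with $\bar s_{v}\equiv \frac{1}{m_v}\sum_{t\in\mT}\sum_{e\in E_{:v}} w_{et}\bar x_{et}$.

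First I would verify that $(\bar x,\bar s)$ is feasible for Problem~\ref{eq:fixedtime-offline-opt}-LP, checking each constraint in turn. Each $x^{*}_{et}(\hat p)\in\{0,1\}$, so its expectation $\bar x_{et}$ lies in $[0,1]$, satisfying the relaxed box constraint. For the availability constraint, every realization satisfies $x^{*}_{et}(\hat p)\le \hat p_{vt}a_{ut}$; taking expectations and using $E_{\hat p}[\hat p_{vt}]=p_{vt}$ (donor availability $a_{ut}$ is deterministic) yields $\bar x_{et}\le p_{vt}a_{ut}$, which is precisely the LP availability constraint. The per-donor degree constraint $\sum_{e\in E_{u:}} x_{et}\le a_{ut}$ holds in every realization and is preserved under expectation by linearity. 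Here it is convenient to note that the availability constraint forces $x_{et}=0$ on every unavailable edge, so the degree and normalization sums may equivalently range over \emph{all} edges adjacent to a vertex rather than the realization-dependent sets $E_{u:}^t,E_{:v}^t$. Finally, $\bar s_v=E_{\hat p}[s^{*}_v(\hat p)]$ by linearity, where $s^{*}_v(\hat p)$ is the normalized matched weight in realization $\hat p$.

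The only constraint requiring genuine care is the proportionality constraint $\gamma s_v\le s_{v'}$. This holds in every realization because $x^{*}(\hat p)$ is feasible for Problem~\ref{eq:fixedtime-offline-opt}; applying expectation to both sides and using linearity gives $\gamma\bar s_v=E_{\hat p}[\gamma s^{*}_v(\hat p)]\le E_{\hat p}[s^{*}_{v'}(\hat p)]=\bar s_{v'}$ for every pair $v\neq v'$. Thus $(\bar x,\bar s)$ is LP-feasible. I would then compute its objective: by linearity of expectation, $\sum_{t}\sum_{e} w_{et}\bar x_{et}=E_{\hat p}\!\left[\sum_t\sum_e w_{et}x^{*}_{et}(\hat p)\right]=E[\OPT{\gamma}]$, and the lemma follows.

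I expect the main (and essentially only) obstacle to be conceptual rather than computational: the argument hinges on the feasible region and objective being \emph{linear}, so that averaging feasible per-realization solutions yields a feasible solution with the averaged objective. The proportionality constraint is the step where this could plausibly break, but because it is enforced as a linear inequality on the $s$-variables it is stable under expectation; the argument would fail if proportionality were imposed on realized rather than expected quantities, or nonlinearly. A minor point to state cleanly is $E_{\hat p}[\hat p_{vt}]=p_{vt}$, which is immediate from the definition of $p_{vt}$ as the availability probability and $\hat p_{vt}$ as its $\{0,1\}$ realization.
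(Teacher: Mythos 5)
Your proposal is correct and takes essentially the same approach as the paper: averaging the per-realization offline-optimal integral solutions, verifying LP-feasibility of the averaged solution constraint by constraint (using $E[\hat p_{vt}]=p_{vt}$), and invoking linearity of expectation for the objective. Your write-up is in fact more careful than the paper's — notably the observation that the availability constraint lets the realization-dependent edge sets $E_{u:}^t, E_{:v}^t$ be replaced by the full edge sets, and the explicit check that the proportionality constraints survive expectation — but the underlying argument is identical.
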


This result lets us use Problem~\ref{eq:fixedtime-offline-opt}-LP as an upper-bound on the matched weight for any matching policy; we use this as a baseline for which to compare other matching policies.

We consider two classes of LP-based policies: \emph{non-adaptive} policies (which pre-commit to a set of edges that may be matched), and \emph{adaptive} policies (which may change their matching decisions at each time step).

\subsection{Non-adaptive Policies}
We consider a class of non-adaptive policies which \emph{pre-match} at most one edge for each donor at each time step---that is, matching decisions may not adapt at each time step as new information is revealed.
At each time step, if the donor is pre-matched to an edge and the edge's recipient is available, then this edge is matched; otherwise the donor remains unmatched during this time step.
Of course, this does not guarantee that all donors are matched at each time step---and therefore the competitive ratio can be quite low.

\paragraph{Warm-Up: Policies based on Problem~\ref{eq:fixedtime-offline-opt}.} First we consider a non-adaptive policy based on an optimal solution for Problem~\ref{eq:fixedtime-offline-opt}-LP. 
\begin{definition}[\texttt{NAdapLP}($\alpha, \gamma$)]\label{def:NAdapLP}
Let $x^*_{et}$ denote an optimal solution to Problem~\ref{eq:fixedtime-offline-opt}-LP with \fairness{} parameter $\gamma\in [0, 1]$ and $\alpha\geq 0$.
For each time step $t\in \mT$ and each donor $u\in U$, edge $e\in E_{u:}$ is pre-matched with probability $\alpha x^*_{et} / p_{vt}$, and the donor is not pre-matched with probability $1 - \alpha \sum_{e=(u,v)\in E_{u:}} x_{et}^*/p_{vt}$.
At each time step, all donors are matched using their pre-matched edge, if the pre-matched donor is available.
\end{definition}

In this policy, parameter $\alpha$ is a scaling factor used to ensure that each edge assignment distribution is \emph{valid}---that is, that $ \alpha \sum_{e=(u,v)\in E_{u:}} x_{et}^* /p_{vt}\leq 1$ for all $u\in U$.
Note that this policy can only be implemented if each of these distributions are valid.
Conveniently, the probability that any edge is matched by \texttt{NAdapLP}($\alpha,\gamma$) is expressible in terms of the optimal solution to Problem~\ref{eq:fixedtime-offline-opt}-LP used to define this policy.
\begin{lemma}\label{lem:prob-match-nonadapt-fixedtime}
 Let $x^*_{et}$ be the optimal solution used in policy \texttt{NAdapLP}($\alpha,\gamma$).
 The unconditional probability that edge $e$ is matched at time $t$ by policy \texttt{NAdapLP}($\alpha,\gamma$) is $\alpha x^*_{et}$.
\end{lemma}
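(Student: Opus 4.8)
The plan is to decompose the event ``edge $e=(u,v)$ is matched at time $t$'' into two independent sub-events and multiply their probabilities. By the definition of \texttt{NAdapLP}($\alpha,\gamma$), edge $e$ is matched at time $t$ if and only if (i) $e$ is the edge \emph{pre-matched} for donor $u$ at time $t$, and (ii) recipient $v$ is available at time $t$ (i.e., $\hat p_{vt}=1$). So the whole argument reduces to computing $\Pr[(\mathrm{i})]\cdot\Pr[(\mathrm{ii})]$ and observing a cancellation.

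First I would record the probability of event (i). By construction, for each donor $u$ at time $t$ the pre-matching draw selects edge $e$ with probability $\alpha x^*_{et}/p_{vt}$ (and leaves $u$ unmatched with the complementary probability); this is exactly the marginal probability that $e$ is the pre-matched edge, since at most one edge is pre-matched per donor per time step. I would note this is well defined: the LP constraint $x_{et}\leq p_{vt}a_{ut}$, inherited from Problem~\ref{eq:fixedtime-offline-opt} with $\hat p_{vt}$ replaced by $p_{vt}$, guarantees $x^*_{et}/p_{vt}\leq a_{ut}\leq 1$, and the validity condition $\alpha\sum_{e\in E_{u:}}x^*_{et}/p_{vt}\leq 1$ makes the per-donor pre-matching probabilities a legitimate distribution.

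Next I would record the probability of event (ii): recipient $v$ is available at time $t$ with probability $p_{vt}$, by definition of the arrival distribution. The key structural observation is that the pre-matching randomness is drawn \emph{independently} of the recipient-arrival realization $\hat p_{vt}$, so events (i) and (ii) are independent and multiply:
$$
\Pr[\,e \text{ matched at } t\,] = \frac{\alpha x^*_{et}}{p_{vt}}\cdot p_{vt} = \alpha x^*_{et}.
$$

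The only genuine care point --- rather than a true obstacle --- is the degenerate case $p_{vt}=0$. Here the same constraint $x_{et}\leq p_{vt}a_{ut}$ forces $x^*_{et}=0$, so recipient $v$ never arrives and $e$ is never matched, matching the claimed value $\alpha x^*_{et}=0$; I would handle this via the convention that the pre-match probability is $0$ whenever $p_{vt}=0$, so no division by zero occurs. I expect the independence of the pre-matching draw from the arrival realization to be the one assumption worth stating explicitly, since the entire cancellation of the $p_{vt}$ factor --- and hence the clean form of the result --- hinges on it.
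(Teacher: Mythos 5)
Your proof is correct and follows essentially the same route as the paper's: both decompose the matching event into the pre-matching draw (probability $\alpha x^*_{et}/p_{vt}$) and the independent recipient-arrival event (probability $p_{vt}$), then multiply and cancel. Your version is in fact slightly more careful than the paper's terse argument, since you explicitly justify well-definedness via the LP constraints and handle the degenerate case $p_{vt}=0$, neither of which the paper addresses.
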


Lemma~\ref{lem:prob-match-nonadapt-fixedtime} leads to some additional observations about this policy.
\begin{corollary}\label{cor:fixedtime-weight}
 \texttt{NAdapLP}($\alpha,\gamma$) has competitive ratio $CR=\alpha$. % matches weight $\alpha \sum\limits_{t\in \mT} \sum\limits_{e\in E} x_{et}^* w_{et}$ in expectation.
\end{corollary}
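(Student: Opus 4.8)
The plan is to reduce the corollary to the two facts already in hand: the exact per-edge matching probability from Lemma~\ref{lem:prob-match-nonadapt-fixedtime} and the LP upper bound from Lemma~\ref{lem:lp-ub}. First I would compute the expected weight matched by \texttt{NAdapLP}($\alpha,\gamma$) directly. By linearity of expectation the expected matched weight is a sum over edges and time steps of $w_{et}$ times the unconditional probability that $e$ is matched at $t$; Lemma~\ref{lem:prob-match-nonadapt-fixedtime} gives that probability as exactly $\alpha x^*_{et}$, where $x^*$ is the optimal solution to Problem~\ref{eq:fixedtime-offline-opt}-LP used to define the policy. Hence
$$E[\texttt{ALG}] = \sum_{t\in\mT}\sum_{e\in E} w_{et}\,\alpha x^*_{et} = \alpha \sum_{t\in\mT}\sum_{e\in E} w_{et} x^*_{et} = \alpha Z_{LP},$$
since $\sum_{t,e} w_{et} x^*_{et}$ is precisely the LP objective value $Z_{LP}$.

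Next I would insert the benchmark. Lemma~\ref{lem:lp-ub} gives $Z_{LP}\geq E[\OPT{\gamma}]$, so the display above yields $E[\texttt{ALG}] = \alpha Z_{LP} \geq \alpha E[\OPT{\gamma}]$ on every instance; equivalently, the ratio $E[\texttt{ALG}]/Z_{LP}$ is identically $\alpha$. Because the LP value is the upper bound against which these non-adaptive policies are measured, this establishes $CR=\alpha$: the bound $CR \geq \alpha$ holds on every instance by the displayed inequality, and tightness follows on any instance where the relaxation is integral, so that $Z_{LP}$ coincides with the offline optimum and the ratio is exactly $\alpha$.

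The main obstacle is bookkeeping around the benchmark rather than any hard estimate. One must be careful that the LP defining the policy carries the same fairness parameter $\gamma$, so that the $Z_{LP}$ appearing here is the $\gamma$-constrained value and the correct comparison is to $E[\OPT{\gamma}]$ via Lemma~\ref{lem:lp-ub}, not to $E[\OPT{0}]$ (against which the ratio could be strictly smaller when the \fairness{} constraints are costly and $Z_{LP}$ drops below the unconstrained optimum). I would also verify that the instances used for tightness keep the policy well defined, i.e.\ that $\alpha$ is small enough for the edge-assignment distributions in Definition~\ref{def:NAdapLP} to be valid, $\alpha\sum_{e=(u,v)\in E_{u:}} x^*_{et}/p_{vt}\leq 1$ for all $u,t$. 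Everything else reduces to linearity of expectation and direct substitution of Lemma~\ref{lem:prob-match-nonadapt-fixedtime}.
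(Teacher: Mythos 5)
Your proof is correct and is essentially the paper's own argument, which disposes of this corollary in one line (``Both corollaries follow directly from Lemma~\ref{lem:prob-match-nonadapt-fixedtime} and the constraints of Problem~\ref{eq:fixedtime-offline-opt}-LP''): linearity of expectation plus the per-edge matching probability $\alpha x^*_{et}$ gives $E[\texttt{ALG}]=\alpha Z_{LP}$, which is then set against the LP benchmark of Lemma~\ref{lem:lp-ub}. Your explicit caveat that the comparison must be to the $\gamma$-constrained benchmark $E[\OPT{\gamma}]$ rather than $E[\OPT{0}]$ (against which the ratio can genuinely fall below $\alpha$ when the \fairness{} constraints are costly) is a point the paper leaves implicit, and it is the right reading of the claim.
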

\begin{corollary}
 \texttt{NAdapLP}($\alpha,\gamma$) is always $\gamma$-\fair{} in expectation, that is, $EP=\gamma$.
\end{corollary}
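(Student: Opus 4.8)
The plan is to reduce the claim to the single identity $E[Y_v]/m_v = \alpha s^*_v$, where $s^*_v$ is the value of the auxiliary variable $s_v$ at the LP optimum $x^*$ used to instantiate the policy; once this identity is in hand, $\gamma$-\fairness{} in expectation becomes literally the LP's own fairness constraint, so the guarantee direction is immediate. I will treat the degenerate case $\alpha=0$ (empty allocation) as excluded, since the policy is only meaningful for $\alpha>0$.

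First I would expand $E[Y_v]$ by linearity of expectation. Writing $Y_v = \sum_{t\in\mT}\sum_{e\in E^t_{:v}} w_{et}\,\mathbf{1}[e\text{ matched at }t]$, linearity gives $E[Y_v] = \sum_{t\in\mT}\sum_{e\in E^t_{:v}} w_{et}\Pr[e\text{ matched at }t]$. Lemma~\ref{lem:prob-match-nonadapt-fixedtime} supplies $\Pr[e\text{ matched at }t] = \alpha x^*_{et}$, so $E[Y_v] = \alpha\sum_{t\in\mT}\sum_{e\in E^t_{:v}} w_{et} x^*_{et} = \alpha\, m_v\, s^*_v$, where the last step is exactly the defining equation for $s_v$ in Problem~\ref{eq:fixedtime-offline-opt}-LP evaluated at $x^*$. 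Dividing by $m_v$ yields $E[Y_v]/m_v = \alpha s^*_v$. The expected outcome is $\gamma$-\fair{} precisely when $\gamma\,E[Y_{v'}]/m_{v'} \leq E[Y_v]/m_v$ for every pair, which after substitution and cancelling $\alpha>0$ becomes $\gamma s^*_{v'} \leq s^*_v$ --- exactly the constraint $\gamma s_v \leq s_{v'}$ of Problem~\ref{eq:fixedtime-offline-opt}-LP read over the swapped pair $(v',v)$. Since $x^*,s^*$ are feasible for that constraint, the bound holds on every instance $(G,p,a)$, giving $EP \geq \gamma$.

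The subtle part is tightness: I must rule out the possibility that the policy is secretly \emph{more} than $\gamma$-\fair{} on every instance, which would force $EP>\gamma$. For a fixed instance the best achievable proportionality equals $(\min_v s^*_v)/(\max_v s^*_v)$, and this can exceed $\gamma$ whenever the LP optimum happens to be more balanced than the constraint demands (for example when all recipients are symmetric, in which case it equals $1$). To pin $EP$ at $\gamma$ I would construct one instance on which maximizing weight strictly conflicts with fairness --- two recipients $v_1,v_2$ with $v_1$ reachable only through high-weight edges and $v_2$ only through low-weight edges, and ample donor capacity so the objective wants to pour all mass into $v_1$ --- so that the fairness constraint $\gamma s_{v_1}\leq s_{v_2}$ is active at the optimum, yielding $s^*_{v_2}=\gamma s^*_{v_1}$ and realized proportionality exactly $\gamma$. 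Combined with the guarantee direction this gives $EP=\gamma$. I expect the main obstacle to be verifying that the fairness constraint is genuinely binding at this optimum rather than slack; I would settle this either by complementary slackness against the LP dual, or by a short perturbation argument showing that any slack optimum admits a feasible move that strictly increases the objective.
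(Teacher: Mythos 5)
Your first paragraph is, in substance, the paper's entire proof: the paper disposes of this corollary with a single remark that it ``follows directly from Lemma~\ref{lem:prob-match-nonadapt-fixedtime} and the constraints of Problem~\ref{eq:fixedtime-offline-opt}-LP,'' which unpacks to exactly your chain $E[Y_v] = \sum_{t}\sum_{e} w_{et}\,\alpha x^*_{et} = \alpha\, m_v\, s^*_v$, after which the LP constraint $\gamma s_v \leq s_{v'}$ transfers verbatim to the expected outcome and gives $EP \geq \gamma$. Where you genuinely depart from the paper is your second paragraph: the paper never addresses the tightness direction $EP \leq \gamma$ at all, even though its definition of $EP$ (a minimum over instances $(G,p,a)$ of the best achievable proportionality) means the literal equality $EP = \gamma$ requires exhibiting an instance on which the realized proportionality is exactly $\gamma$ and no better. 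Your two-recipient construction --- high-weight edges into one recipient, low-weight edges into the other, so the LP diverts only the minimum mass needed to make $\gamma s_{v_1} \leq s_{v_2}$ bind --- is the right kind of witness, and this is a real gap in the paper's terse treatment that your proof fills. The one caveat is that this part of your argument is still a sketch: you correctly flag that the binding-ness of the \fairness{} constraint at the LP optimum needs the complementary-slackness or perturbation argument to be nailed down, and until that is done your proof establishes $EP \geq \gamma$ rigorously (matching the paper) and $EP = \gamma$ only modulo that verification. In short: same core argument as the paper for the guarantee, plus an added --- and arguably necessary --- tightness argument the paper glosses over.
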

Both corollaries follow directly from Lemma~\ref{lem:prob-match-nonadapt-fixedtime} and the constraints of Problem~\ref{eq:fixedtime-offline-opt}-LP.
These results suggest that we can arbitrarily increase the weight matched by \texttt{NAdapLP}($\alpha,\gamma$) by increasing $\alpha$; however these policies are not guaranteed to be \emph{valid}.
This policy can only be implemented if $\alpha$ is small enough that each edge assignment distribution is valid.
\begin{lemma}\label{lem:nadaplp}
Policy \texttt{NAdapLP}($1/D,\gamma$) is always valid and achieves a competitive ratio of $CR = 1/D$ and $EP=\gamma$ for all $\gamma\in [0, 1]$, where $D$ is the maximum degree of any donor: $D \equiv \max_{u\in U} |E_{u:}|$.
\end{lemma}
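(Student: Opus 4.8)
The plan is to isolate the single new ingredient of the statement, \emph{validity}, and to read off the two performance guarantees from the corollaries already established for arbitrary $\alpha$. Recall that Corollary~\ref{cor:fixedtime-weight} and the corollary immediately following it assert that every valid instance of \texttt{NAdapLP}($\alpha,\gamma$) achieves $CR=\alpha$ and $EP=\gamma$, respectively. Consequently, as soon as \texttt{NAdapLP}($1/D,\gamma$) is shown to be valid, substituting $\alpha=1/D$ immediately yields $CR=1/D$ and $EP=\gamma$ for every $\gamma\in[0,1]$, with no additional argument. The entire content of the lemma thus rests on verifying the validity condition.

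To prove validity I must check that, at each donor $u$ and time step $t$, the pre-matching rule defines a genuine probability distribution; since the edge probabilities $\alpha x^*_{et}/p_{vt}$ are nonnegative and the remaining mass is assigned to ``not pre-matched,'' this reduces precisely to the inequality
$$\frac{1}{D}\sum_{e=(u,v)\in E_{u:}}\frac{x^*_{et}}{p_{vt}}\leq 1 \qquad \text{for all } u\in U,\ t\in\mT,$$
where $x^*$ is the optimal solution to Problem~\ref{eq:fixedtime-offline-opt}-LP defining the policy. The key step is the feasibility constraint $x^*_{et}\leq p_{vt}\,a_{ut}$ of Problem~\ref{eq:fixedtime-offline-opt}-LP (the relaxed analogue of $x_{et}\leq\hat p_{vt}a_{ut}$), which gives the per-edge bound $x^*_{et}/p_{vt}\leq a_{ut}\leq 1$. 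Summing over the at most $D$ edges incident to $u$ gives $\sum_{e\in E_{u:}} x^*_{et}/p_{vt}\leq |E_{u:}|\leq D$ by the definition $D=\max_{u\in U}|E_{u:}|$, and dividing by $D$ closes the inequality for every $u$ and $t$.

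The only point demanding care --- and the sole real obstacle --- is the division by $p_{vt}$ when $p_{vt}=0$. For such an edge the same feasibility constraint forces $x^*_{et}\leq p_{vt}a_{ut}=0$, hence $x^*_{et}=0$, so the term $x^*_{et}/p_{vt}$ is an indeterminate $0/0$. I would resolve this by the natural convention that any edge with $p_{vt}=0$ is never pre-matched and contributes $0$ to the sum (such edges can simply be dropped from $E_{u:}$ at time $t$). Under this convention the per-edge bound and the summation above are unaffected, validity holds at every donor and time step, and the two corollaries then deliver $CR=1/D$ and $EP=\gamma$, completing the argument.
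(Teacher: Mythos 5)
Your proposal is correct and follows essentially the same route as the paper: both read off $CR=1/D$ and $EP=\gamma$ from Corollary~\ref{cor:fixedtime-weight} and its companion, and both establish validity from the LP constraint $x^*_{et}\leq p_{vt}a_{ut}$, which gives $x^*_{et}/p_{vt}\leq 1$ per edge and hence $\sum_{e\in E_{u:}}x^*_{et}/p_{vt}\leq D$. Your explicit handling of the degenerate case $p_{vt}=0$ (where the constraint forces $x^*_{et}=0$ and the term should be read as zero) is a small point of care that the paper's proof silently glosses over.
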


In other words, Policy \texttt{NAdapLP}($1/D,\gamma$) is always implementable; thus there always exists a non-adaptive policy which achieves expected \fairness{} $EP=\gamma$ and competitive ratio $CR=1/D$ for all $\gamma \in [0, 1]$.
This competitive ratio guarantee is quite weak, and we might ask whether a better non-adaptive policy exists.
Indeed it does, and we discuss this policy next.

\paragraph{\textbf{Optimal $\gamma$-Fair Non-Adaptive Policies}}
Here we aim to identify a policy which is $\gamma$-\fair{} in expectation ($EP=\gamma$), and also maximizes matched weight (and thus $CR$); we refer to this as an \emph{optimal} $\gamma$-\fair{} non-adaptive policy.
To identify this policy, we first observe that \emph{any} non-adaptive policy can be characterized by the probability that it pre-matches edge $e$ at time $t$: $y_{et}\in [0, 1]$; using these statistics, the unconditional probability that $e=(u,v)$ is matched at time $t$ is $y_{et} p_{vt}$.
Note that for any non-adaptive policy, the probability that donor $u$ is pre-matched at time $t$ is at most $1$ if $u$ is available and $0$ otherwise; thus, statistics $y_{et}$ must satisfy conditions $\sum_{e\in E_{u:}} y_{et} \leq a_{ut}$ for all $u\in U$, and $t\in \mT$.
% %
% And the expected matched weight for this policy is $\sum_{t\in \mT} \sum_{e\in E} w_{et} y_{et} p_{vt} $.
%
If a non-adaptive policy is $\gamma$-\fair{}, then $y_{et}$ must satisfy conditions
$$
    \gamma s_v \leq s_{v'} \quad \forall v, v'\in V
$$
with 
$$
    s_v = \frac{1}{m_v} \sum_{t\in \mT} \sum_{e\in E^t_{:v}} y_{et} p_{vt} w_{et} \quad \forall v\in V.
$$

Aggregating these conditions, we observe that the statistics $y_{et}$ of any $\gamma$-\fair{} non-adaptive policy is a feasible solution to the following LP.
\begin{equation}
\label{eq:optimal-nonadap-fixedtime}
    \begin{array}{rll}
        \max & \sum\limits_{t\in \mT}\sum\limits_{e\in E} w_{et} y_{et} p_{vt}\\
        \text{s.t.} & y_{et} \in [0, 1] & \forall e\in E\; t\in \mT \\
        & s_{v} \in \mathbb{R} &\forall v\in V\\
        & \sum\limits_{e\in E_{u:}} y_{et} \leq a_{ut} & \forall u\in U,\, t\in \mT \\
        &s_v = \frac{1}{m_v} \sum\limits_{t\in \mT} \sum\limits_{e\in E^t_{:v}} y_{et} p_{vt} w_{et} & \forall v\in V\\
        & \gamma s_{v} \leq s_{v'} & \forall v,v'\in V,\, v\neq v'.\\
    \end{array}
\end{equation}
Furthermore, a solution to Problem~\ref{eq:optimal-nonadap-fixedtime} corresponds to a non-adaptive policy; we use an optimal solution to this problem to define a $\gamma$-\fair{} non-adaptive policy.
\begin{definition}[\texttt{NAdapOpt}($\gamma$)]
Let $y^*_{et}$ be an optimal solution to Problem~\ref{eq:optimal-nonadap-fixedtime}.
For each time step $t\in \mT$ and each donor $u\in U$, a pre-matched edge is drawn with probability $y^*_{et}$; with probability $1 - \sum_{e\in E^t_{u:}} y^*_{et}$, no edge is pre-matched.
At each time step $t$ and for each available donor $u$, if the donor is pre-matched with an available recipient, then the pre-matched edge is matched.
\end{definition}

% Since any non-adaptive policy can be characterized by a feasible solution to Problem~\ref{eq:optimal-nonadap-fixedtime}, any \emph{optimal} solution to Problem~\ref{eq:optimal-nonadap-fixedtime} yields a non-adaptive policy with $EP=\gamma$ and \emph{maximal} $CR$.
%
% We formalize these observations in the following lemma.
%
\begin{lemma}\label{lem:nadapopt}
%
% Let $y^*_{et}$ be an optimal solution to Problem~\ref{eq:optimal-nonadap-fixedtime} with \fairness{} parameter $\gamma$. 
%
\texttt{NAdapOpt}($\gamma$) 
%
% defined with parameters $y^*_{et}$ 
%
achieves expected \fairness{} $EP=\gamma$ and maximal competitive ratio over all non-adaptive policies, with $CR\geq 1/D$ (where $D$ is the maximum degree of any donor).
\end{lemma}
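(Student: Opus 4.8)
The plan is to exploit the fact, established in the discussion preceding the lemma, that Problem~\ref{eq:optimal-nonadap-fixedtime} is an \emph{exact} characterization of the $\gamma$-\fair{} non-adaptive policies: every such policy induces pre-matching statistics $y_{et}\in[0,1]$ that form a feasible point of Problem~\ref{eq:optimal-nonadap-fixedtime}, and conversely every feasible point defines a valid policy via the \texttt{NAdapOpt} construction. Moreover, since $e=(u,v)$ is matched at time $t$ exactly when $u$ pre-matches $e$ (probability $y_{et}$) and $v$ is available (probability $p_{vt}$, independently), the per-edge match probability is $y_{et}p_{vt}$, so the objective of Problem~\ref{eq:optimal-nonadap-fixedtime} equals the expected matched weight and $E[Y_v]/m_v=s_v$. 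First I would record that \texttt{NAdapOpt}($\gamma$) is always valid: the donor constraint $\sum_{e\in E_{u:}} y^*_{et}\le a_{ut}\le 1$ guarantees that the pre-match probabilities together with the ``no edge'' probability form a genuine distribution, so no rescaling (unlike \texttt{NAdapLP}) is needed.

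For expected \fairness{}, feasibility of $y^*$ gives $\gamma s^*_v\le s^*_{v'}$ for all $v,v'$, and since $E[Y_v]/m_v=s^*_v$ this means the expected allocation is $\gamma$-\fair{} on \emph{every} instance; hence the inner maximum in the definition of $EP$ is always at least $\gamma$, so $EP\ge\gamma$. For the matching upper bound I would exhibit a small instance (two recipients whose edge-weight profiles conflict, so that weight maximization is blocked only by \fairness{}) on which the optimal solution drives some pair to the boundary $\gamma s^*_v=s^*_{v'}$; on that instance the policy is exactly $\gamma$-\fair{} and no better, so taking the minimum over instances yields $EP\le\gamma$, and therefore $EP=\gamma$.

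For the competitive ratio, fix any instance $(G,p,a)$. By the characterization above, \texttt{NAdapOpt}($\gamma$) maximizes expected matched weight over \emph{all} $\gamma$-\fair{} non-adaptive policies on that instance. In particular, \texttt{NAdapLP}($1/D,\gamma$) is a valid $\gamma$-\fair{} non-adaptive policy (Lemma~\ref{lem:nadaplp}), so $E[\texttt{NAdapOpt}(\gamma)]\ge E[\texttt{NAdapLP}(1/D,\gamma)]$ on this instance. Since the denominator $E[\OPT{0}]$ in the definition of $CR$ does not depend on the policy, dividing through and minimizing over instances gives $CR(\texttt{NAdapOpt}(\gamma))\ge CR(\texttt{NAdapLP}(1/D,\gamma))=1/D$ by Lemma~\ref{lem:nadaplp}. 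The same per-instance domination argument proves maximality among $\gamma$-\fair{} non-adaptive policies: for any competing such policy \texttt{ALG}, we have $E[\texttt{NAdapOpt}(\gamma)]\ge E[\texttt{ALG}]$ on every instance, hence $E[\texttt{NAdapOpt}(\gamma)]/E[\OPT{0}]\ge E[\texttt{ALG}]/E[\OPT{0}]$ on every instance, and taking the minimum over instances gives $CR(\texttt{NAdapOpt}(\gamma))\ge CR(\texttt{ALG})$.

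The main obstacle is the conversion from per-instance weight optimality to optimality of the \emph{worst-case} ratio $CR$: because $CR$ is a minimum over graphs, one must argue that per-instance domination survives the minimization, which works here only because $E[\OPT{0}]$ is a fixed, policy-independent quantity for each instance. A secondary point requiring care is the exactness $EP=\gamma$ (as opposed to merely $EP\ge\gamma$), which is not automatic from feasibility and needs the explicit tight instance; and one should double-check both directions of the policy/LP correspondence so that \texttt{NAdapLP}($1/D,\gamma$) and any competitor \texttt{ALG} genuinely lie in the feasible region of Problem~\ref{eq:optimal-nonadap-fixedtime}.
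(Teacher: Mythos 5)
Your proposal is correct and follows essentially the same route as the paper: expected \fairness{} comes from feasibility of $y^*$ in Problem~\ref{eq:optimal-nonadap-fixedtime}, and the bound $CR\geq 1/D$ comes from observing that the pre-match statistics of \texttt{NAdapLP}($1/D,\gamma$) (namely $\frac{1}{D}x^*_{et}/p_{vt}$) are feasible for Problem~\ref{eq:optimal-nonadap-fixedtime}, so the optimal solution dominates that policy's expected weight on every instance. You are in fact more careful than the paper on two points it elides---making explicit that per-instance domination passes through the worst-case minimization because $E[\OPT{0}]$ is policy-independent, and that exactness $EP=\gamma$ (rather than $EP\geq\gamma$) requires a tight instance---but these are refinements of the same argument, not a different one.
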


Both non-adaptive policies described in this section are $\gamma$-\fair{} in expectation ($EP=\gamma$), thought their competitive ratio guarantee is somewhat weak. 
This is expected, since non-adaptive policies cannot change their matching decisions between time steps---they pre-match at most one edge for each donor at each time step.
Some pre-matched edges will in fact be unavailable, depending on the particular demand realization (which is not known in advance).

\subsection{Adaptive Policies}
%
% We briefly discuss an \emph{adaptive} version of these policies, which match all donors when possible.
% %
Adaptive policies can use any available information in order to make matching decisions---including observed demand realizations, prior matching decisions, and the distribution of future demand.
We leave a general characterization of adaptive policies to future work; here we consider a simple class of adaptive policies that naturally extends the non-adaptive policies from the previous section.
This policy class, \texttt{AdaptMatch}, takes as input the set of edges pre-matched by a non-adaptive policy, denoted by $M$, where $M_{ut}=e\in E$ if $u$ is pre-matched along edge $e$ at time $t$, and $M_{ut}=\emptyset$ if $u$ remains unmatched at time $t$.
\texttt{AdaptMatch} uses pre-matched edges when possible, and if a pre-matched edge is not available it matches donors using either \RAND{} (with probability $\gamma$) or \MAX{} (with probability $1-\gamma$).
Algorithm~\ref{alg:adapt} gives a pseudocode description of this matching algorithm.

% These \emph{approximate} adaptive policies involve a simple modification of non-adaptive policies  \texttt{NAdapLP\_Fixedtime}$(1/D)$ and \texttt{NAdapOpt\_Fixedtime}: if a pre-matched edge is unavailable, the these policies re-draw an edge from their pre-matching distribution if one is available.

\SetKwInput{KwInput}{Input}                % Set the Input
\SetKwInput{KwOutput}{Output}              % set the Output
\begin{algorithm}[H]
\SetAlgoNoLine
\KwInput{donors $V$, recipients $U$, edges $E$, time steps $\mT$, donor availability, pre-matched edges $M_{ut}$, parameter $\gamma\in [0, 1]$.}
\KwOutput{Matched edges at each time step}
 \For{each time step $t\in \mT$}{
    \For{each available donor, $u$}{
    \uIf{$u$ has a pre-matched edge $M_{ut}$, and this edge is available}{
        Match $u$ using pre-matched edge $M_{ut}$\;
    }
    \Else{
        Flip a weighted coin with ``heads'' probability $\gamma$\;
        \uIf{heads}{
        Match $u$ with policy \RAND{}\;
        }
        \Else{
        Match $u$ with policy \MAX{}\;
        }
    }

    }
 }
 \caption{\texttt{AdaptMatch}: Adaptive matching policy\label{alg:adapt}}
\end{algorithm}

Note that this adaptive policy matches strictly more edges (in expectation) than their non-adaptive counterparts.
Thus, expected matched weight (and $CR$) is strictly larger for \texttt{AdaptMatch} than the non-adaptive policy it is based on.
%; since adaptive matching decisions are drawn from an LP-based distribution with \fairness{} constraints, we expect that these adaptive matching decisions also preserve some \fairness{} between recipients.

While competitive ratio is at least as large for these policies ($CR\geq 1/D$) as for their non-adaptive counterparts, there is no meaningful guarantee on expected \fairness{}.
We leave more sophisticated adaptive policies to future work.
However, while these approximate adaptive policies do not have strong guarantees on $CR$ or $EP$, they perform far better than these guarantees well in computational experiments (see \S~\ref{sec:simulations}).
%

% So far all policies we have discussed are for a fixed-time setting, where donor availability is an exogenous parameter provided as input.
% %
% Instead, we might allow the \emph{policy} to determine when donors should be matched. 
% %
% This may be advantageous, for example, when recipient availability and edge weights are correlated, or if edge weights vary systemically over time. 
% %
% It may be difficult for a (human) policymaker to adjust their matching policy based on these effects, and in this case it might be desirable to leave these decisions to a matching algorithm. 
% %
% In the next section we briefly discuss these policies.
%%% - end new stuff

\section{Results}\label{sec:results}

Prior to deploying new matching policies in an online setting, it is important to assess their performance in simulations.
Section~\ref{sec:simulations} outlines computational simulations with real data from the Facebook Blood Donation Tool, using our proposed matching policies; Section~\ref{sec:online-experiments} describes our online experiment with the Facebook blood donation tool.
In the Electronic Companion (\ref{app:simulations}) we also present results using synthetic, publicly available data.

\subsection{Computational Simulations}\label{sec:simulations}

We developed open-source simulation code for these simulations, which implements each of our proposed policies; details of these simulations are discussed in the Electronic Companion~\ref{app:simulations}.
All code used in these simulations is available in the supplementary material, and on Github.\footnote{Link removed during review. All code is included in the supplementary file \texttt{code.zip}.}
Data related to the Facebook blood donation tool cannot be released due to concerns for user privacy.
We test each matching policy from the previous section using data from the Facebook Blood Donation tool, and we ran separate simulations for 12 major cities around the world.
For each city we create a blood donation graph, consisting of donors $V$ and recipients $U$ registered with the Blood Donation tool; edges are created between donors and recipients within 15km of each other, and edge weights are calculated by the GBTD models described in Section~\ref{sec:edgeweights}.
Each of these cities has on the order of 1000 donors, 100 recipients, and 100,000 edges.

We require that donors are notified exactly once every $K=14$ days, and the first day each donor is notified is chosen randomly from $t\in \{1, \dots, 13\}$; recipient availability parameter $p_{vt}$ are determined from past notifications.
The realized recipient availability used in these experiments is randomly drawn using parameters $p_{vt}$, and this realization is fixed for the remainder of the experiment.
Each simulation runs for 60 days, so each donor is notified exactly 4 times.
Since policies \RAND{} and \texttt{AdaptMatch} are random, we run $50$ independent trials with these policies.
We define recipient normalization scores $m_v$ as the average weight matched to $v$ over all $50$ trials of \RAND{}.

For policy \MAX{} we calculate the total matched weight, and for \RAND{} and \texttt{AdaptMatch} we calculate the average matched weight over all trials.
We also calculate the (average) weight matched to each recipient, $Y_v$. 
Using the recipient weights we calculate a measure of \fairness{} $Gamma$, defined as
$$ Gamma \equiv \max \{\gamma \in [0, 1] \, | \, \gamma Y_v / m_v \leq Y_{v'}/m_{v'} \forall v, v' \in V\}.$$

\noindent\textbf{Simulation Results}
Simulation results for all $12$ cities are shown in Figure~\ref{fig:simulations}.
For each city we simulate matching using policies \MAX{}, \RAND{}, and \texttt{AdaptMatch}.
We implement several versions of \texttt{AdaptMatch}: each uses a fixed parameter $\gamma\in \{0.0, 0.1, \dots, 1.0\}$, and pre-matched edges $M\equiv \texttt{NAdapOpt}(\gamma)$.
These plots in Figure~\ref{fig:simulations} illustrate the trade-off between overall matched weight and proportionality (or fairness) for recipients.
\begin{figure}
    \centering
    \includegraphics[width=0.9\textwidth]{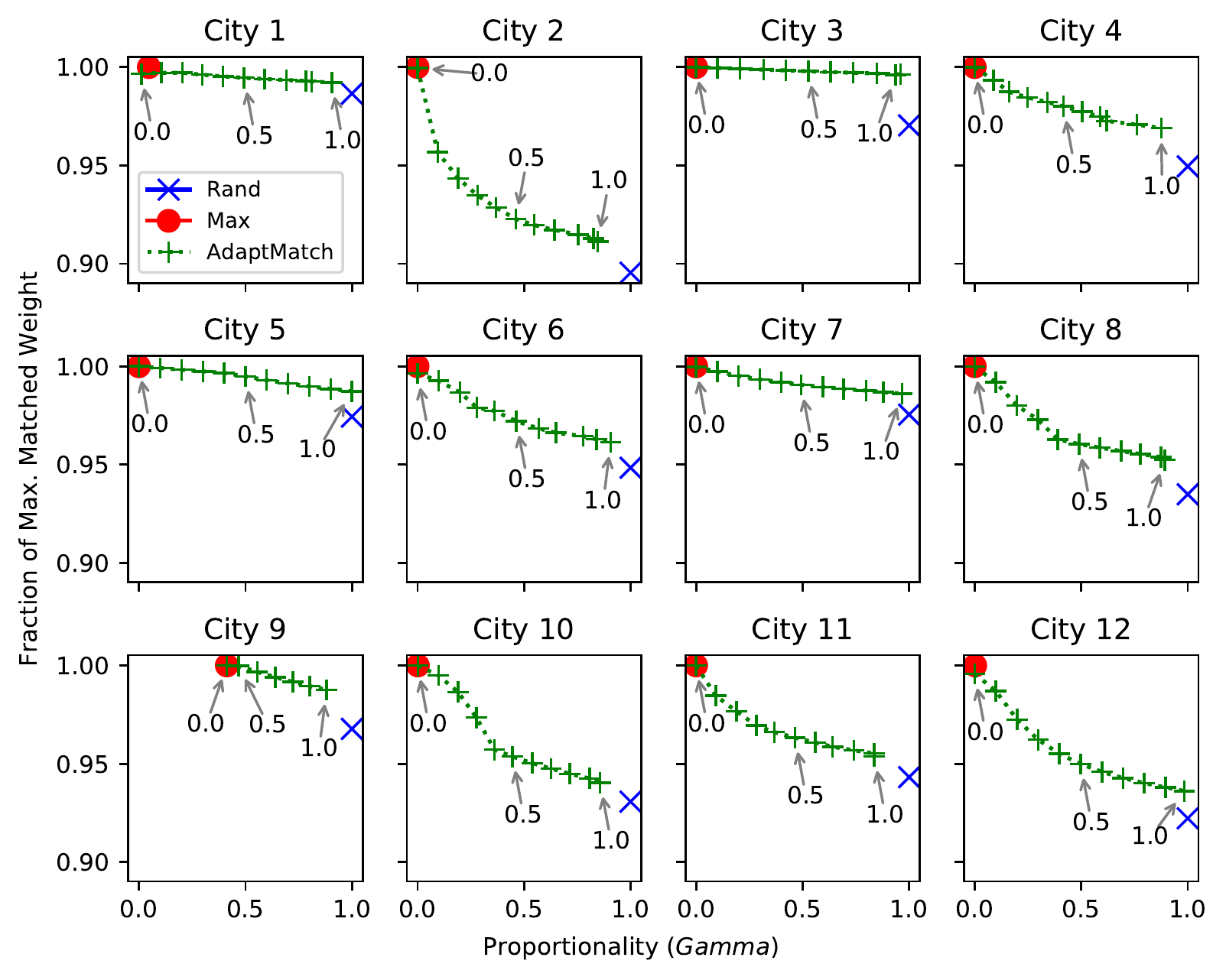}
    \caption{\label{fig:simulations}
    Simulation results for $12$ cities around the world.
    Each plot corresponds to one $60$-day trial in each city. 
    The vertical axis shows the fraction of matched weight, compared to \MAX{}; the horizontal axis shows \fairness{} metric $Gamma$.
    Policy \MAX{} is shown as a red circle, \RAND{} is a blue ``$\times$'', and \texttt{AdaptMatch} is a green ``+'' (for $\gamma=0.0, 0.1, \dots, 1.0$.
    Arrows on each plot indicate the values of $\gamma$ used by \texttt{AdaptMatch}.
    \label{fig:pareto-main}}
\end{figure}
While \MAX{} maximizes matched weight in this setting, it does not guarantee a \fair{} outcome: in all cities except for City 1 and City 9, $Gamma$ is zero for \MAX{}, meaning that some recipients are never matched by this policy. 
On the other hand, \RAND{} is \fair{} by definition (and $Gamma=1$), though this policy does not maximize matched weight.
However, \RAND{} always matches at least $90\%$ of the maximum possible matched weight in all simulations, and more than $95\%$ in five out of the 12 cities.

While policy \texttt{AdaptMatch} does not have strong guarantees on matched weight or \fairness{}, it mediates smoothly between  the extremes of \RAND{} and \MAX{}, according to parameter $\gamma$.
In some cases, this policy matches more weight than \RAND{}, while still achieving a nearly-\fair{} outcome ($Gamma$ equal to 1), as in Cities 3, 5, and 7.

\subsection{Online Experiments}\label{sec:online-experiments}

As a proof-of-concept, we compare the max-weight matching policy (\MAX{}) to the random baseline policy (\RAND{}, which is similar in behavior to the notification policy currently used by the Facebook Blood Donation tool), in an online experiment. 
The goal of this experiment is to answer the question: \emph{can we increase the overall number of donor meaningful actions} by carefully selecting \emph{which recipient} to notify each donor about. 
Both of these policies notify donors once every $14$ days; they only differ in \emph{which recipient} each donor is notified about.
\RAND{} selects a nearby recipient at random, while \MAX{} selects a nearby recipient with the greatest likelihood of donor MA---according to our predictive model.  

To compare these policies we design a randomized an online experiment, including hundreds of thousands of donors registered with the Facebook Blood Donation tool. 
We randomly partition these donors into a control group (who were notified using policy \RAND{}) and a test group (who were notified using policy \MAX{}). 
As in our simulations, we include \emph{only} static recipients (e.g., hospitals and large blood banks), who are always available to receive donations.

\paragraph{Potential Impact on Donors and Recipients.}
This experiment was approved by an internal review board.
We emphasize that the impact of these experiments is minimal: the only difference between the test and control group in this experiment is \emph{which} donation opportunity the donor is notified about.
The impact on blood recipients is less clear: due to our experimental design we cannot effectively measure the \fairness{} of each notification policy in a meaningful way.
However it is possible that any optimization-based matching policy (e.g., \MAX{} or \texttt{AdaptMatch}) prioritizes certain recipients over others.
This may marginalize recipients in rural areas or those with a limited Facebook presence.
More thorough analysis of these impacts is necessary before more widespread adoption of these policies.

\paragraph{Online Experiment Results}

This experiment ran from Nov.\ 23 to Dec.\ 17, 2019 (25 days); in total, 1,359,980 donors were notified using either policy \RAND{} or \MAX{}. 
In this experiment many donors had only one compatible recipient---in this case, the donor was \emph{always} notified about this recipient, regardless of the notification policy. 
For clarity, we distinguish between notifications sent to donors who had only one compatible recipient (1R), and those sent to donors with two or more compatible recipients (+2R).
Thus we only expect to observe a difference between control and test groups for +2R notifications; we expect the same outcome for (1R) notifications.
Table~\ref{tab:online-results} shows the number of notifications and meaningful actions for notifications of each type (1R and +2R), in both the test and control group.
Note that only +2R notifications are relevant for comparing the test and control groups, though we report both for transparency.
The key result in these tables is the percentage of notifications that led to meaningful action (\%MA, a number on $[0, 100]$). 
We report the Wilson score interval for \%MA as $C\pm R/2$, where $[C-R/2, C+R/2]$ is the 95\% confidence interval.

\begin{table}[]
\caption{\label{tab:online-counts} Online Experiments - Number of notifications (\#Notifs) and meaningful actions (\#MA), over the online experiment. Notifications are separated into those sent to donors with only one compatible recipient (1R), and those sent to donors with two or more compatible recipients (+2R). Wilson score intervals are for the percentage of notifications that lead to MA are presented as $C\pm R/2$, where the $95\%$ confidence interval is $[C-R/2, C+R/2]$.}
\renewcommand{\arraystretch}{1.0}
\begin{center}
\begin{tabular}{@{}crrrcrrr@{}}
\toprule
\multirow{2}{*}{Notif. Group} & \multicolumn{3}{c}{Control (\RAND{})} && \multicolumn{3}{c}{Test (\MAX{})} \\ 
  & \#MA    & \#Notifs  & \%MA && \#MA   & \#Notifs    & \%MA  \\ \midrule
 1R &10,534 & 215,544 & $4.7\pm 0.1$ && 10,755 & 214,841 & $4.8\pm 0.1$  \\
% +2R & 17,479 & 469,887 & $3.6\pm 0.1$ && 18,030 & 459,708 & $3.8\pm 0.1$  \\\bottomrule % --- !!! these are old numbers...
+2R & 15,551 & 420,230 & $3.7\pm 0.1$ && 16,054 & 412,387 & $3.9\pm 0.1$  \\\bottomrule % --- !!! these are new numbers, from the plots
\end{tabular}
\end{center}
\label{tab:online-results}
\end{table}

In the remaining discussion we consider only the +2R notifications, as there is no difference between the test and control group for 1R notifications.
For the overall experiment, \%MA is about $5\%$ higher for \MAX{} than for \RAND{}.
%
% However for individual cities, the number of notifications and MAs is too small to draw conclusions.
%
To better understand the differences between the control and test groups, we use two statistical tests to compare the notifications sent by \MAX{} and \RAND{}.

\paragraph{Overall Comparison} 
We use both a two-sided and one-sided Chi-square test to compare \%MA (+2R notifications only) for the control and test groups, over all notifications sent during this experiment.
Let $P_{\RAND{}}$ and $P_{\MAX{}}$ represent \%MA for the control (\RAND{}) and test (\MAX{}) groups, respectively.
The two-sided test checks the null hypothesis \textbf{H0:} $P_{\RAND{}}=P_{\MAX{}}$, with alternative $P_{\RAND{}}\neq P_{\MAX{}}$; the one-sided test checks null hypothesis \textbf{H0:} $P_{\RAND{}}=P_{\MAX{}}$, with alternative $P_{\RAND{}}<P_{\MAX{}}$. 
We can reject \emph{both} of these null hypotheses with $p\ll 0.01$. 
In light of the results presented in Table~\ref{tab:online-results}, these statistical test suggests \MAX{} achieves a small ($\sim 5\%$) but significant improvement over \RAND{} in terms of overall \%MA.
In the next set of statistical tests we compare each \emph{day} of the experiment as a separate trial.

\paragraph{Daily Paired Comparison} 
Next we treat day of the experiment as a set of \emph{paired measurements} of both $P_{\RAND{}}$ and $P_{\MAX{}}$.
For each day of the experiment ($26$ days in total) we calculate sample  estimates of $P_{\RAND{}}$ and $P_{\MAX{}}$---i.e., the $100$ times the ratio of MAs to overall notifications.
Note that donors are notified once every $14$ days, meaning that the set of donors notified on any particular day is nearly disjoint from the donors notified on any other day of the experiment; for this reason we treat the measurements of $P_{\RAND{}}$ and $P_{\MAX{}}$ on different days as independent.

We use a two-sided Wilcoxon signed-rank test to check the null hypothesis \textbf{H0:} the median difference between daily $P_{\MAX{}}$ and $P_{\RAND{}}$ is zero.
We reject this null hypothesis ($p\ll0.01$), further confirming that notification policy \MAX{} yields a higher MA rate than \RAND{}.
\begin{figure}
    \centering
  \includegraphics[width=0.7\linewidth]{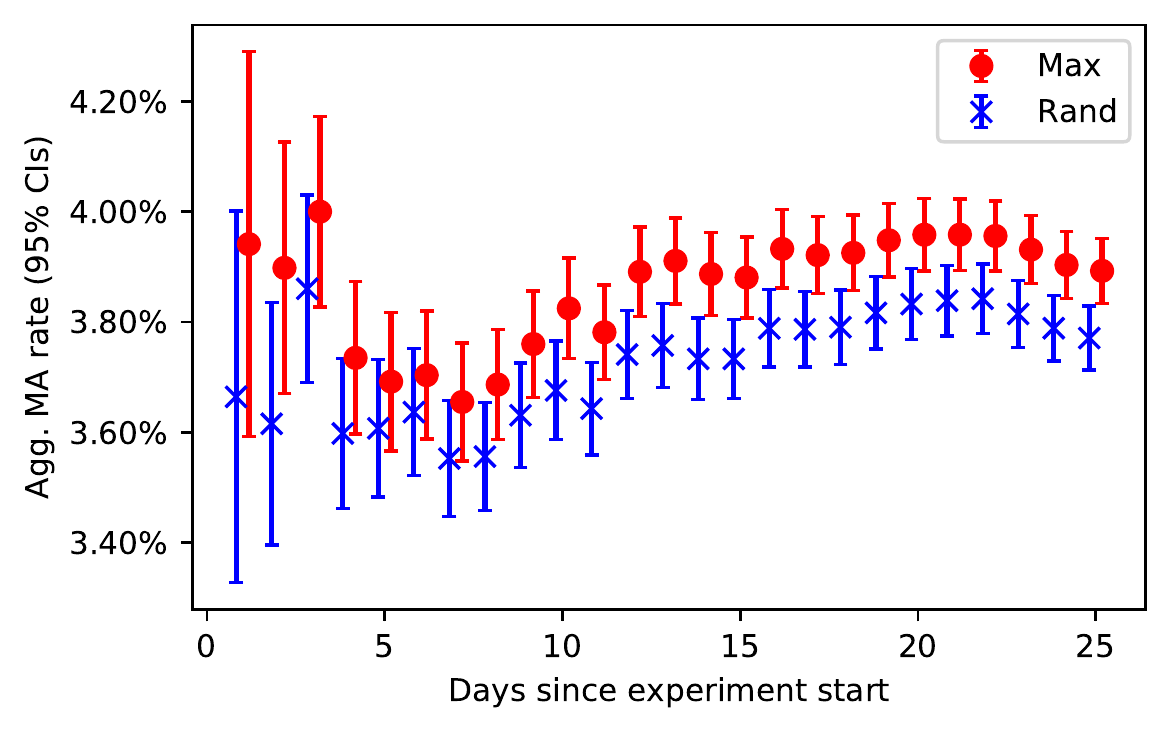}
\caption{
Aggregate MA rate for both \RAND{} and \MAX{}, for each day in the experiment. Rates are calculated using the cumulative number of notifications and MAs at each day in the experiment.
Error bars show the 95\% confidence interval (Wilson score interval), and points indicate the center of the interval.
%
% (Bottom) Daily MA rates, calculated using only the MAs and notifications for each day.
}
  \label{fig:daily-ma} 
\end{figure}
For illustration, Figure~\ref{fig:daily-ma} shows the $95\%$ confidence intervals for $P_\RAND{}$ and $P_\MAX{}$, using the aggregated number of notifications and MAs for each day of the experiment.
In the Electronic Companion (\ref{sec:additional-experiments}) we show the results for each individual day, as well as the cumulative rates.

\section{Discussion}

We introduce the problem of connecting blood donors with demand centers in a time-dependent setting, with uncertain demand. 
We formalize this as an online matching problem, with the priorities of \emph{efficiency} (maximizing the number of donations) and \emph{fairness} (\fairness{}) for recipients. 
We propose a class of stochastic policies for this setting, to which we compare a realistic randomized baseline.
In simulations we see a clear trade-off between the overall number of donations and \fairness{} (Figure~\ref{fig:pareto-main}); the particular trade-off between these objectives depends on the notification policy used.
Policy \MAX{} (which maximizes edge weight/expected donations) results in a $5$-$10\%$ increase in the overall number of expected donations, compared to a random baseline (\RAND{}).
However \MAX{} tends to favor certain recipients over others.
In our simulations, \MAX{} completely ignores some recipients in $11$ out of the $12$ cities tested---presumably because these recipients are associated with lower edge weights.
On the other hand, \RAND{} always sends a ``fair'' amount of notifications to each recipient, regardless of edge weight (according to the definition of fairness and \fairness{} used in this study).
To mediate between the extremes of \RAND{} and \MAX{}, we propose a class of stochastic policies (\texttt{AdaptMatch}); in simulations these policies effectively control the balance between the overall expected number of donations and \fairness{} across recipients, using parameter $\gamma$.

As a proof-of-concept we run an online experiment via the Facebook Blood Donation Tool, comparing notification policies \RAND{} and \MAX{}.
We find that \MAX{} results in about $5\%$ more meaningful actions (a proxy for donations) than \RAND{}.
In relative terms this improvement seems small, however the implications are quite meaningful.
This experiment investigated \emph{one small improvement} to the notification strategy used by the Facebook Blood Donation Tool, i.e., whether the donor is notified about a nearby donation opportunity at random (\RAND{}), or notified about a particular opportunity selected by a predictive model (\MAX{}).
Several other modifications to the notification policy might yield similar improvements: for example by changing \emph{how often} each donor is notified, by more carefully planning for \emph{future donation needs}, or by tailoring notifications to each donor's unique preferences and values.

The potential impact of this work is considerable.
Indeed, if our observed results generalize to the entire community of Facebook blood donors, then a $5\%$ increase in donor action corresponds to at about $170,000$\footnote{Our results reported in Table~\ref{tab:online-results} suggest that policy \MAX{} leads has a meaningful action rate of $3.9\%$, compared to $3.7\%$ for policy \RAND{}. The difference is $0.2\%$---or $160,000$ of the estimated 85 million donors registered with the Blood Donation Tool (\texttt{\url{https://socialimpact.facebook.com/health/blood-donations/}}).} \emph{more} donors taking meaningful action toward donation when notified.
Even if few of these meaningful actions lead to actual donation, the increase is still substantial.

Before implementing these policies at a large scale in practice, it is important to understand their potential impacts on both blood donors and recipients.
In this study impact on donors is minimal; the only difference between notification policies is \emph{which donation opportunity} they are notified about.
However our simulation results indicate that blood recipients may face significant impacts from changes in notification policy.
For example policies that prioritize edges with a high likelihood of meaningful action (e.g., policy \MAX{}) may ignore certain recipients---such as  rural hospitals or small donation centers with a limited web presence.
% 
% In simulations we find that notification policy \MAX{} often ignores certain recipients, presumably because these recipients are associated with low edge weights---e.g., recipients in rural areas, or those with a limited web presence.
%
% In other words, \MAX{} tends to ignore recipients who are associated with low likelihood of meaningful action -- which may include recipients in rural areas, or those with a limited Facebook presence.
%
This observation is particularly troubling if low-weight recipients are \emph{already} unlikely to recruit donors, which we expect is the case.
Of course, this potential injustice is exactly the motivation for our stochastic policy \texttt{AdaptMatch}.

Blood donation is a global challenge, and has been the focus of many dedicated organizations and researchers for decades. 
In this paper we investigate a new opportunity to recruit and coordinate a massive network of blood donors and recipients, enabled by the widespread use of social networks.
We formalize a matching problem around matching blood donors with recipients, and test these policies in both offline simulations and an online experiment using the Facebook Blood Donation Tool.
Our findings suggest that a matching paradigm can significantly increase the overall number of donations, though it remains a challenge to do so while treating recipients equitably.

% \bibliographystyle{nonumber}

% Bibliography
\bibliographystyle{Science}
% \bibliographystyle{acm}
% \bibliography{refs,new_refs}

\bibliography{refs,new_refs} % if more than one, comma separated

\begin{thebibliography}{10}

\bibitem{Guan18:Voluntary}
Y.~Guan, When voluntary donations meet the state monopoly: Understanding blood
  shortages in china. {\it The China Quarterly\/} {\bf 236}, 1111 (2018).

\bibitem{Osorio15:Structured}
A.~F. Osorio, S.~C. Brailsford, H.~K. Smith, A structured review of
  quantitative models in the blood supply chain: a taxonomic framework for
  decision-making. {\it International Journal of Production Research\/} {\bf
  53}, 7191 (2015).

\bibitem{Carneiro10:Demographic}
A.~B. Carneiro-Proietti, {\it et~al.\/}, Demographic profile of blood donors at
  three major brazilian blood centers: results from the international reds-ii
  study, 2007 to 2008. {\it Transfusion\/} {\bf 50}, 918 (2010).

\bibitem{WHO17:Blood}
W.~{WHO}, Blood safety and availability (2017). [Online; accessed
  22-July-2004].

\bibitem{roberts2019global}
N.~Roberts, S.~James, M.~Delaney, C.~Fitzmaurice, The global need and
  availability of blood products: a modelling study. {\it The Lancet
  Haematology\/} {\bf 6}, e606 (2019).

\bibitem{Osorio17:Simulation}
A.~F. Osorio, S.~C. Brailsford, H.~K. Smith, S.~P. Forero-Matiz, B.~A.
  Camacho-Rodr{\'\i}guez, Simulation-optimization model for production planning
  in the blood supply chain. {\it Health Care Management Science\/} {\bf 20},
  548 (2017).

\bibitem{Katsaliaki07:Using}
K.~Katsaliaki, S.~C. Brailsford, Using simulation to improve the blood supply
  chain. {\it Journal of the Operational Research Society\/} {\bf 58}, 219
  (2007).

\bibitem{Zahiri17:Blood}
B.~Zahiri, M.~S. Pishvaee, Blood supply chain network design considering blood
  group compatibility under uncertainty. {\it International Journal of
  Production Research\/} {\bf 55}, 2013 (2017).

\bibitem{Dillon17:Two}
M.~Dillon, F.~Oliveira, B.~Abbasi, A two-stage stochastic programming model for
  inventory management in the blood supply chain. {\it International Journal of
  Production Economics\/} {\bf 187}, 27 (2017).

\bibitem{el2018robust}
H.~El-Amine, E.~K. Bish, D.~R. Bish, Robust postdonation blood screening under
  prevalence rate uncertainty. {\it Operations Research\/} {\bf 66}, 1 (2018).

\bibitem{prastacos1980pbds}
G.~P. Prastacos, E.~Brodheim, Pbds: a decision support system for regional
  blood management. {\it Management Science\/} {\bf 26}, 451 (1980).

\bibitem{sojka2008blood}
B.~N. Sojka, P.~Sojka, The blood donation experience: self-reported motives and
  obstacles for donating blood. {\it Vox sanguinis\/} {\bf 94}, 56 (2008).

\bibitem{reich2006randomized}
P.~Reich, {\it et~al.\/}, A randomized trial of blood donor recruitment
  strategies. {\it Transfusion\/} {\bf 46}, 1090 (2006).

\bibitem{chell2018systematic}
K.~Chell, T.~E. Davison, B.~Masser, K.~Jensen, A systematic review of
  incentives in blood donation. {\it Transfusion\/} {\bf 58}, 242 (2018).

\bibitem{van2014predicting}
A.~Van~Dongen, R.~Ruiter, C.~Abraham, I.~Veldhuizen, Predicting blood donation
  maintenance: the importance of planning future donations. {\it Transfusion\/}
  {\bf 54}, 821 (2014).

\bibitem{godin2005factors}
G.~Godin, {\it et~al.\/}, Factors explaining the intention to give blood among
  the general population. {\it Vox sanguinis\/} {\bf 89}, 140 (2005).

\bibitem{craig2017waiting}
A.~C. Craig, E.~Garbarino, S.~A. Heger, R.~Slonim, Waiting to give: stated and
  revealed preferences. {\it Management Science\/} {\bf 63}, 3672 (2017).

\bibitem{ouhbi2015free}
S.~Ouhbi, J.~L. Fern{\'a}ndez-Alem{\'a}n, A.~Toval, A.~Idri, J.~R. Pozo, Free
  blood donation mobile applications. {\it Journal of medical systems\/} {\bf
  39}, 52 (2015).

\bibitem{sumnig2018role}
A.~S{\"u}mnig, M.~Feig, A.~Greinacher, T.~Thiele, The role of social media for
  blood donor motivation and recruitment. {\it Transfusion\/} {\bf 58}, 2257
  (2018).

\bibitem{alanzi2019use}
T.~Alanzi, B.~Alsaeed, Use of social media in the blood donation process in
  saudi arabia. {\it Journal of Blood Medicine\/} {\bf 10}, 417 (2019).

\bibitem{abbasi2018saving}
R.~A. Abbasi, {\it et~al.\/}, Saving lives using social media: Analysis of the
  role of twitter for personal blood donation requests and dissemination. {\it
  Telematics and Informatics\/} {\bf 35}, 892 (2018).

\bibitem{Karp90:Optimal}
R.~M. Karp, U.~V. Vazirani, V.~V. Vazirani, {\it Proceedings of the Annual
  Symposium on Theory of Computing (STOC)\/} (1990), pp. 352--358.

\bibitem{Mehta07:AdWords}
A.~Mehta, A.~Saberi, U.~Vazirani, V.~Vazirani, Adwords and generalized online
  matching. {\it Journal of the ACM\/} {\bf 54} (2007).

\bibitem{Dickerson18:Allocation}
J.~P. Dickerson, K.~A. Sankararaman, A.~Srinivasan, P.~Xu, {\it AAAI Conference
  on Artificial Intelligence (AAAI)\/} (2018).

\bibitem{lowalekar2018online}
M.~Lowalekar, P.~Varakantham, P.~Jaillet, Online spatio-temporal matching in
  stochastic and dynamic domains. {\it Artificial Intelligence\/} {\bf 261}, 71
  (2018).

\bibitem{wang2018stable}
X.~Wang, N.~Agatz, A.~Erera, Stable matching for dynamic ride-sharing systems.
  {\it Transportation Science\/} {\bf 52}, 850 (2018).

\bibitem{yuan2016blood}
S.~Yuan, S.~Chang, K.~Uyeno, G.~Almquist, S.~Wang, Blood donation mobile
  applications: are donors ready? {\it Transfusion\/} {\bf 56}, 614 (2016).

\bibitem{anstee1987polynomial}
R.~P. Anstee, A polynomial algorithm for b-matchings: an alternative approach.
  {\it Information Processing Letters\/} {\bf 24}, 153 (1987).

\bibitem{godin2007determinants}
G.~Godin, M.~Conner, P.~Sheeran, A.~B{\'e}langer-Gravel, M.~Germain,
  Determinants of repeated blood donation among new and experienced blood
  donors. {\it Transfusion\/} {\bf 47}, 1607 (2007).

\bibitem{steihaus1948problem}
H.~Steihaus, The problem of fair division. {\it Econometrica\/} {\bf 16}, 101
  (1948).

\bibitem{crowcroft1998differentiated}
J.~Crowcroft, P.~Oechslin, Differentiated end-to-end internet services using a
  weighted proportional fair sharing tcp. {\it ACM SIGCOMM Computer
  Communication Review\/} {\bf 28}, 53 (1998).

\bibitem{sedac}
SEDAC, The gridded population of the world (gpw) data (version 4), developed by
  the center for international earth science information network (ciesin),
  columbia university and were obtained from the nasa socioeconomic data and
  applications center (sedac),
  \url{http://sedac.ciesin.columbia.edu/data/collection/gpw-v4} (2020).
  Accessed: 7-10-2020.

\bibitem{Cieliebak08:Complexity}
M.~Cieliebak, S.~Eidenbenz, A.~Pagourtzis, K.~Schlude, On the complexity of
  variations of equal sum subsets. {\it Nord. J. Comput.\/} {\bf 14}, 151
  (2008).

\end{thebibliography}

\textbf{Data and materials availability:} The code used for all computational simulations in this paper is available in the supplementary material, as well as on Github.\footnote{Link removed during review. All code is included in the supplementary file \texttt{code.zip}.}
Data related to the Facebook blood donation tool cannot be released due to concerns for user privacy.

%% begin appendix
%%%%%%%%%%%%%%%%%%%%%%%%%%%%%%%%%%%%%%%%%%

\clearpage
\appendix
%\ECDisclaimer
%%%%%%%%%%%%%%%%%%%%%%%%%%%%%%%%%%%%%%%%%%%%%%%%%%%%%%%%%%

%%% Main head for the e-companion
% \ECHead{Theoretical Matching Model and Simulation Details}

%%%%%%%%%%%%%%%%%%%%%%%%%%%%%%%%%%%%%%%%%%%%%%%%
% BEGIN APPENDIX
%%%%%%%%%%%%%%%%%%%%%%%%%%%%%%%%%%%%%%%%%%%%%%%%

\section{Computational Simulations using Synthetic Data}\label{app:simulations}
Here we provide additional simulation results using publicly-available data.
All code used in this section is available online.\footnote{Link removed during review. All code is included in the supplementary file \texttt{code.zip}.}
We draw random donor and recipient locations from population distributions from four large cities around the world: Jakarta (Indonesia), Istanbul (Turkey), S\~ao Paulo (Brazil), and San Francisco (United States).
All population distributions are generated using data from the Socioeconomic Data and Applications Center (SEDAC) (\cite{sedac}); distance between each donor and recipient is calculated using the Haversine approximation.

\textbf{Edges:} Edges are created for all donor-recipient pairs within 15km of each other.
Edge weights are generated according to random attributes assigned to donors and recipients: each recipient is randomly assigned a ``nominal'' edge weight $w_0\sim U[0.01, 0.08]$, and each recipient is randomly assigned a decay parameter $k\in [5, 10, 20]$.
Edge weights are calculated using the expression $w_0 \times \exp(-D/k)$, where $w_0$ is the recipient's nominal edge weight, $k$ is the donor's decay rate, and $D$ is the distance between donor and recipient (in km).
These parameters are selected to roughly model the heterogeneity of real donation settings: some recipients are more popular or have a greater online presence than others (thus, higher $w_0$); some donors are more willing to travel long distances than others (thus, higher $k$). 

\textbf{Recipient availability:} Half of all recipients are randomly assigned to be static (always available), while the other half are dynamic. 
Dynamic recipients have availability parameters $p_{vt}$ generated as follows: we generate alternating sequences of \emph{low} probability ($p_{vt}=0.1$) and \emph{high} probability ($p_{vt}=0.9$); each sequence has random Poisson-distributed length, with mean $4$. 
These sequences are appended together to create $p_{vt}$ for all $t\in \mT$; the first sequence is randomly chosen to be low or high probability. 
For each matching scenario, we draw a single realization of recipient availability using parameter $p_{vt}$, and this realization remains fixed for the remainder of the experiment.

\textbf{Matching Simulation:} We simulate an online donation scenario over $30$ days, where each donor is notified exactly once every $7$ days; each donor receives their first notification on a random day between the first and sixth day, so each donor is notified either $4$ or $5$ times in each simulation.
We calculate recipient normalization scores by running $100$  trials of \RAND{}; normalization scores $m_v$ are the average weight matched with each recipient $v$ over all trials.

\textbf{Results:}
For each policy we calculate the total matched weight, and the fraction of the maximum possible weight, matched by policy \MAX{}. 
To report \fairness{} we first calculate the normalized weight for each recipient $Y_v/m_v$: the total weight matched with a recipient, divided by their normalization score).
For each policy we calculate a measure of \fairness{} $Gamma$, defined as:
$$Gamma\equiv \max \{\gamma \in [0, 1] \, | \, \gamma Y_v/m_v \leq Y_{v'} / m_{v'} \forall v, v'\in V\}.$$
That is, $Gamma$ is an empirical measure of \fairness{} for an allocation.

Figure~\ref{fig:simulations-app} shows simulation results for all four cities, with matching using policies \MAX{}, \RAND{}, and \texttt{AdaptMatch} (with $\gamma=0.0, 0.1, 0.2, \dots, 1.0$).

\begin{figure}
    \centering
    \includegraphics[width=\textwidth]{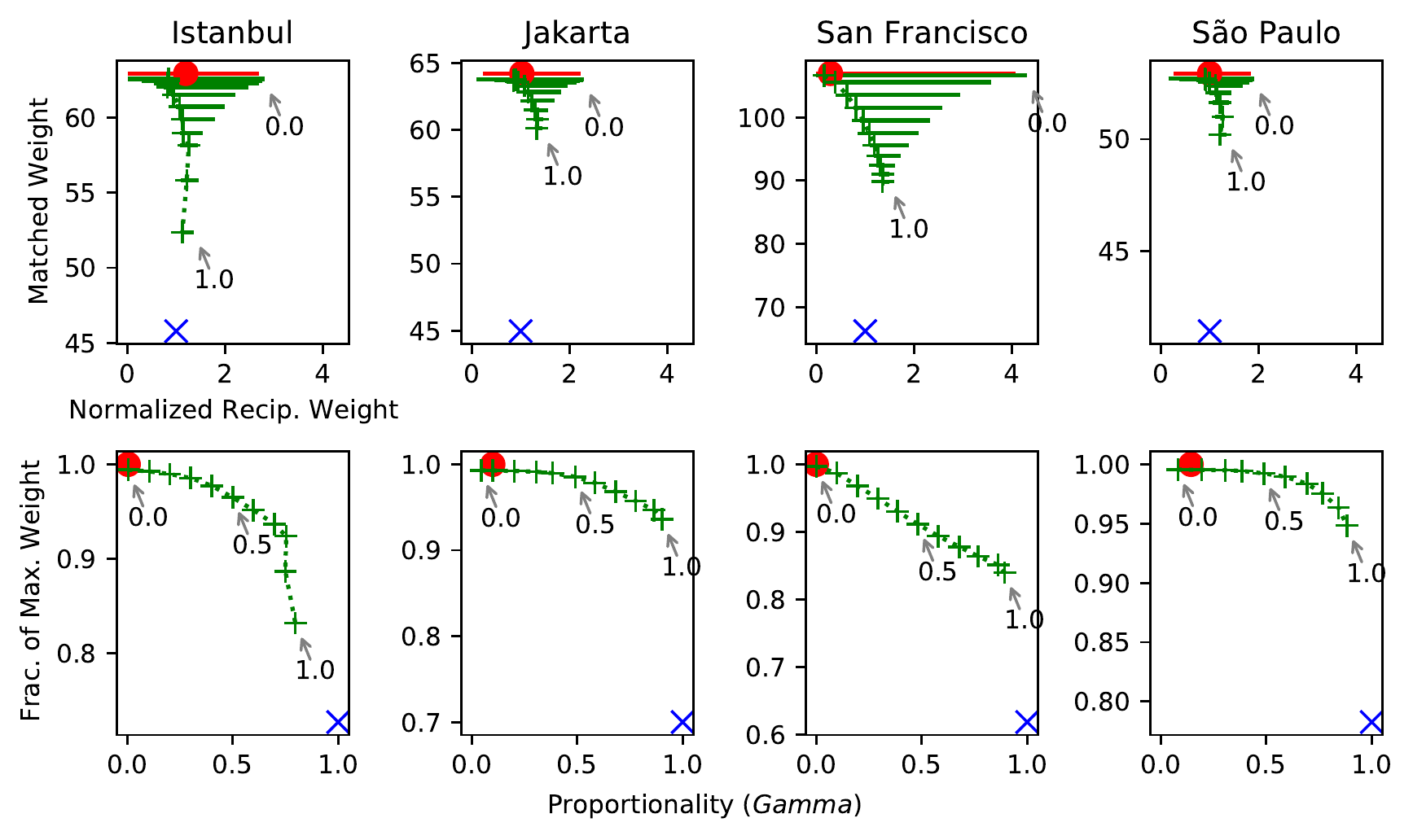}
    \caption{\label{fig:simulations-app}
    Simulation results for four cities, for matching policy \MAX{} (red circle), \RAND{} (blue ``$\times$'') and \texttt{AdaptMatch} with $\gamma=0.0, 0.1, \dots, 1.0$ (green ``+''). 
    Top Row: The vertical axis shows total matched weight for \MAX{}, and the average matched weight for \RAND{} and \texttt{AdaptMatch}; the horizontal axis shows the range of normalized recipient outcomes $Y_v/m_v$; the plot markers show the median value of the range.
    Bottom Row: The vertical axis shows total matched weight as a fraction of \MAX{}; the horizontal axis shows \fairness{} metric $Gamma$.
    Arrows on all plots indicate the $\gamma$ values for \texttt{AdaptMatch}.
    }
\end{figure}

The top row of this figure shows the total weight matched by each policy, and the normalized recipient outcomes; horizontal error bars show the range of normalized recipient outcomes.
A wider range corresponds to a less-\fair{} outcome, since some recipients receive much greater normalized matched weight than others.
For example in San Francisco, policy \MAX{} matches some recipients with normalized weight of $4$, while most other agents receive normalized weight near $0$.

The bottom row shows matched weight as a fraction of \MAX{}, and \fairness{} $Gamma$.
As expected, \MAX{} maximizes matched weight, though there is a wide range of recipient outcomes: for both Istanbul and San Francisco, at least one recipient remains unmatched by \MAX{} (and thus $Gamma$ is zero). 

On the other hand \RAND{} by definition guarantees a \fair{} outcome, with $Gamma=1$. 
This comes at a cost of matched weight: \RAND{} matches between $60\%$ and $80\%$ of the weight matched by \MAX{}.

Policy \texttt{AdaptMatch} mediates between these two extremes, varying the trade-off between weight and \fairness{} with parameter $\gamma$.

Our two primary observations from these experiments are (1) while policy \MAX{} maximizes matched weight, it clearly treats recipients unequally; in the worst case, some recipients are never matched; 
(2) while policy \RAND{} treats recipients equally, it results in a $20$-$30\%$ reduction in matched weight.
Policy \texttt{AdaptMatch} moderates smoothly between \MAX{} and \RAND{}, using parameter $\gamma$; often, this policy yields a Pareto improvement over both extremes.

\subsection{Real-World Online Experiments}\label{sec:additional-experiments}

Figure~\ref{fig:daily-ma-app} shows 95\% confidence intervals (Wilson score) for MA rate in the online experiment.
The top plot shows the aggregated MA rate, using the cumulative number of  notifications and MAs up to each day in the experiment.
The bottom plot shows MA rates for each individual day, using only notifications sent on each day.

\begin{figure}
    \centering
  \includegraphics[width=0.7\linewidth]{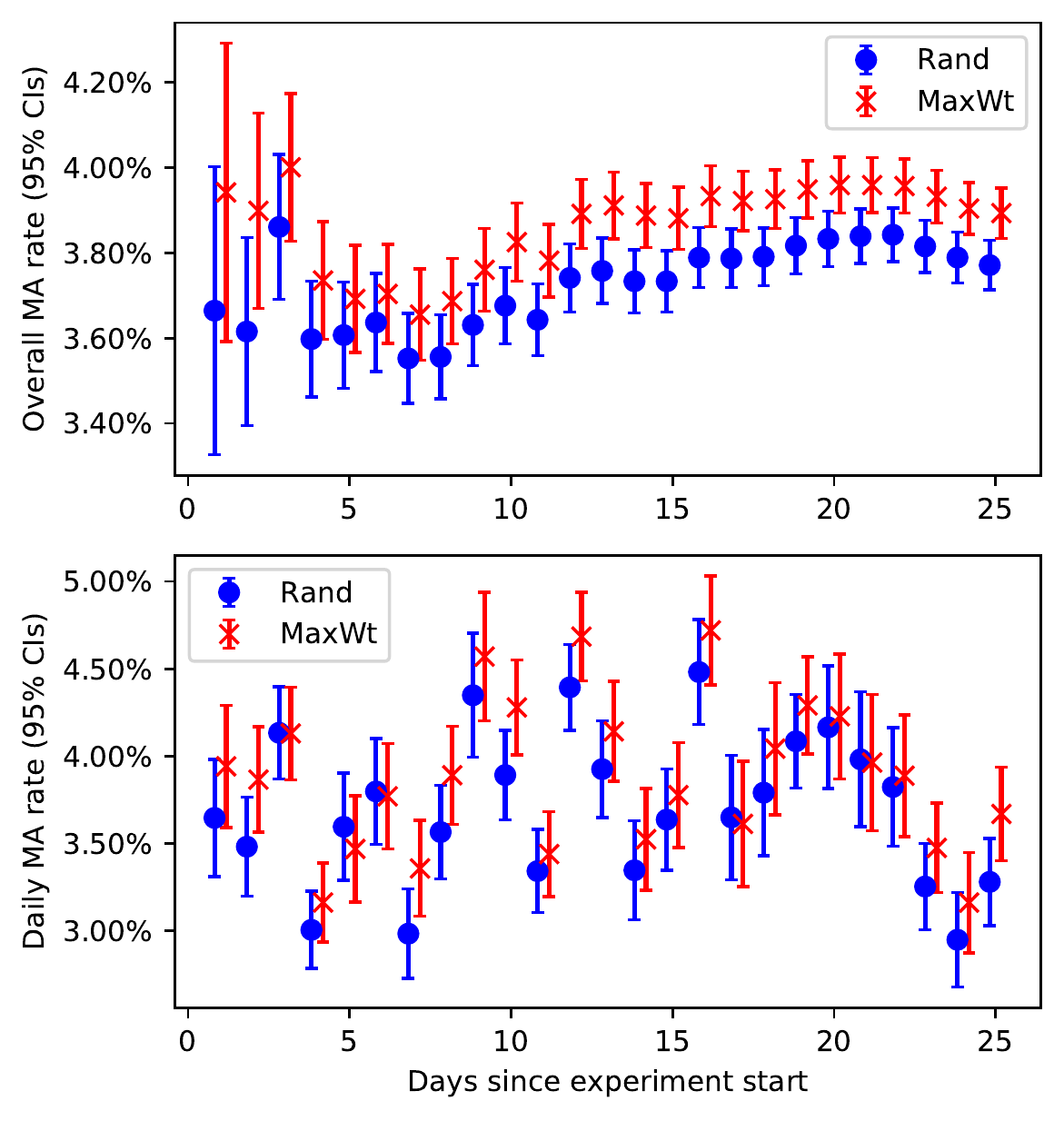}
\caption{
(Top) Aggregate MA rate for both \RAND{} and \MAX{}, for each day in the experiment. Rates are calculated using the cumulative number of notifications and MAs at each day in the experiment.
Error bars show the 95\% confidence interval (Wilson score interval), and points indicate the center of the interval.
(Bottom) Daily MA rates, calculated using only the MAs and notifications for each day.
}
  \label{fig:daily-ma-app} 
\end{figure}

% \section{Matching Model Analysis}

\section{Proofs omitted from the main paper}\label{app:paper-proofs}

This section contains proofs of all theorems and lemmas omitted in the main paper.

\paragraph{Proof of Theorem~\ref{thm:hardness}}
\begin{proof}
This proof uses a reduction from $k$-EQUAL-SUM-SUBSET and PARTITION, each of which are defined as follows:

\paragraph{$k$-EQUAL-SUM-SUBSET:} 
given a multiset $\mathcal S$ of positive integers $x_1, \dots, x_N$, determine whether there are $k$ non-empty disjoint subsets $S_1, \dots, S_K\subset \mathcal S$ such that the sum of integers in each subset is equal. This problem is NP-complete for any $k>1$, and strongly NP-complete when $k$ varies as a function of $N$ and $k=\Omega(N)$~\cite{Cieliebak08:Complexity}.

\paragraph{PARTITION:} 
given a set $\mathcal S$ of positive integers $x_1, \dots, x_N$, determine whether there is a partition of $S$ into subsets $S_1,S_2\subset S$, with $S_1 \cup S_2=S$, such that the sum of $S_1$ and $S_2$ are equal. 
This problem is NP-complete, though efficient pseudo-polynomial time algorithms exist.

We consider two cases separately: $\gamma = 1$ and $\gamma \in (0, 1)$:

\begin{itemize}
    \item \textbf{Case 1:} $\gamma = 1$. \emph{reduction from $k$-EQUAL-SUM-SUBSET.} %
    Given an instance of $k$-EQUAL-SUM-SUBSET we construct a blood donor matching scenario as follows: let there be $k$ recipients (one for each subset) and $N$ donors (one for each integer $x_i$). 
    Each donor $i$ has edge weight $x_i$ to every recipient, thus $G$ is a complete bipartite graph. 
    Let all recipients have the same normalization score $m_v=1$.
    In this case a non-empty $\gamma$-\fair{} allocation awards the same matched weight to every recipient, since all recipients have the same normalization score.
    If such an allocation exists, it can be used to construct an equal-sum partitioning of integers $x_i, \dots, x_N$ into $k$ non-empty, disjoint subsets as follows: let $M_j$ be the set of donor indices matched with recipient $j$, and let subsets $S_1, \dots, S_k$ be defined as $S_j \equiv \{x_{i'} \mid i'\in \{1, \dots, N\}, i' \in M_j \}$; thus, $S_1, \dots S_k$ are non-empty disjoint equal-sum subsets of integers $S$.
    \item \textbf{Case 2:} $\gamma \in (0, 1)$. \emph{reduction from PARTITION.} 
    Given an instance of PARTITION we construct a blood donor matching scenario with $N+1$ donors and $3$ recipients.
    Donors $1$ through $N$ correspond to integers $x_1, \dots, x_N$, and recipients $1$ and $2$ correspond to subsets $S_1$ and $S_2$; as before, all recipient normalization scores are $m_v=1$.
    All donors $1$ through $N$ are adjacent to both recipients $1$ and $2$, where all edges adjacent to donor $i$ have edge weight $x_i$.
    Donor $N+1$ and recipient $3$ are adjacent \emph{only to each other}, with edge weight $\sum_i x_i/ (2 \gamma)$.
    In this case, a non-empty $\gamma$-\fair{} allocation \emph{must} match recipient $3$, resulting in normalized matched weight $\sum_i x_i / (2 \gamma)$.
    Due to \fairness{} constraints both recipients $1$ and $2$ must be matched with normalized matched weight at least $\sum_i x_i / 2$; thus, both recipients must be matched with \emph{exactly} edge weight $\sum_i x_i/2$.
    If such an allocation exists, it can be used to construct an equal sum partition: let $M_1$ and $M_2$ be the indices of donors matched with recipients $1$ and $2$, respectively; let subsets $S_1$ and $S_2$ be defined as $S_j \equiv \{x_{i'} \mid i'\in \{1, \dots, N\}, i\in M_j \}$.
    By definition, both $S_1$ and $S_2$ are equal-sum subsets of integers $S$, and $S_1 \cup S_2 = S$.
\end{itemize}
\end{proof}

\paragraph{Proof of Lemma~\ref{lem:fixedtime-max-0-fair}: $EP=0$ for \MAX{}}
\begin{proof}
We provide a simple example where \MAX{} is $0$-\fair{}.
Let there be one donor and two recipients ($A$ and $B$); the edge to recipient $A$ has weight $0.9$, while the edge to recipient $B$ has weight $1.0$.
Suppose there is only one time step.
\RAND{} matches recipient $A$ and $B$ with equal probability, while \MAX{} never matches $A$. 
Thus for policy \MAX{}, $E[Y_A]=0$ and $m_A>0$; this means that there is no $\gamma>0$ such that this outcome is $\gamma$-\fair{}.
\end{proof}

% {\color{red} %%% begin - remove section

% Intuitively \MAX{} ignores normalization weights $m_v$, meaning that it does not guarantee proportionality in the worst case.
% %
% On the other hand, \MAX{} always maximizes weight.
% %
% \begin{lemma}\label{lem:max-opt-fixedtime-app}
% In a fixed-time setting, \MAX{} achieves competitive ratio $CR= 1$.
% %
% Further, without \fairness{} constraints ($\gamma=0$), \MAX{} is equivalent to an offline-optimal policy (\OPT{$0$}).
% %
% \end{lemma}

% } %%% end - remove section

\paragraph{Proof of Lemma~\ref{lem:max-opt-fixedtime}: $CR=1$ for \MAX{}, and with $\gamma=0$, \MAX{} is equivalent to \OPT{$0$}}
\begin{proof}
First we show that the edges matched by \MAX{} are an optimal solution to Problem~\ref{eq:fixedtime-offline-opt} without \fairness{} constraints, meaning that \MAX{} is an optimal solution \OPT{$0$}.

\textit{Proof by contradiction.} Let $x_{et}$ be the decision variables representing edges matched by \MAX{} (i.e., $x_{et}$ is $1$ is $e$ is matched at time $t$ by \MAX{}, and $0$ otherwise).
Suppose that $x_{et}$ is not an optimal solution to Problem~\ref{eq:fixedtime-offline-opt}. 
Note that without \fairness{} constraints,  Problem~\ref{eq:fixedtime-offline-opt} can be decomposed by both donors $u\in U$ and time steps $t\in \mT$.
If $x_{et}$ is not an optimal solution, then there is a donor $u\in U$ and time $t\in \mT$ such that $\sum_{e\in E_{u:}^t} x_{et} w_{et}$ which is not optimal, i.e., $e$ is not a maximal-weight edge for donor $u$ at time $t$.
In this case, solution $x_{et}$ does not match a maximal-weight edge from $E^t_{u:}$, and thus $x_{et}$ was not produced by \MAX{}, a contradiction.
\end{proof}

% {\color{red} %%% begin - remove section

% On the other hand, since \RAND{} ignores edge weight, its worst-case competitive ratio is low.
% %
% \begin{lemma}
% In the fixed-time setting, the competitive ratio for \RAND{} is at most $CR=1/N$ when there are $N$ recipients.
% \end{lemma}

% } %%% end - remove section

\paragraph{Proof of Lemma~\ref{lem:cr-rand}: $CR=1/N$ for \RAND{}}
\begin{proof}
Consider an example donation graph with $N$ recipients and one donor; there is one edge from the donor to each recipient, and one time step during which all edges are available. 
One ``high-weight'' recipient has edge weight $1$, while the remaining $N-1$ ``low-weight'' recipients have edge weight $\epsilon \simeq 0$.
Policy \MAX{} matches the high-weight recipient with total weight $1$ (due to Lemma~\ref{lem:max-opt-fixedtime}, while \RAND{} matches all recipients with equal probability, with expected weight $1/N + \epsilon (N - 1)/N$.
As $\epsilon \rightarrow 0$, the expected matched weight of \RAND{} is $1/N$, and thus $CR=1/N$.
\end{proof}

\paragraph{Proof of Lemma~\ref{lem:lp-ub}: $Z_{LP}\geq E[\OPT{\gamma}]$}
\begin{proof}
Let 
$(x_{et}^*\mid \hat p_{vt})$ denote the optimal solution of Problem~\ref{eq:fixedtime-offline-opt} for demand realization $\hat p_{vt}$, and let
$\overline x_{et}^*$ denote the \emph{expected value} of $(x_{et}^*\mid \hat p_{vt})$ over all demand realizations drawn from distribution $p_{et}$. 
Note that $\overline x_{et}^*$ is a feasible solution to Problem~\ref{eq:fixedtime-offline-opt}-LP: by taking the expected value of both sides of all constraints in Problem~\ref{eq:fixedtime-offline-opt}, we exactly recover Problem~\ref{eq:fixedtime-offline-opt}-LP (note that, by definition, $E[\hat p_{vt}] = p_{vt}$).
Due to linearity of expectation, the expected objective of the offline optimal solution $(x_{et}^*\mid \hat p_{vt})$ is exactly equal to the objective of $\overline x_{et}^*$ in Problem~\ref{eq:fixedtime-offline-opt}-LP---we denote this expected objective by $E[\OPT{\gamma}]$.
In summary, the \emph{expected} solution to Problem~\ref{eq:fixedtime-offline-opt}, $\overline x_{et}^*$, is a feasible solution to Problem~\ref{eq:fixedtime-offline-opt}-LP and the expected objective value of Problem~\ref{eq:fixedtime-offline-opt} is exactly equal to the objective of $\overline x_{et}^*$ in Problem~\ref{eq:fixedtime-offline-opt}-LP. Therefore, $\texttt{LP}(\gamma) \geq E[\OPT{\gamma}]$.
\end{proof}

\paragraph{Proof of Lemma~\ref{lem:prob-match-nonadapt-fixedtime}: the unconditional probability of matching $e$ at $t$ with for \texttt{NAdapLP}($\alpha,\gamma$) is $\alpha x_{et}^*$}
\begin{proof}
Let $R^t_{v}$ be the event that recipient $v$ is available at time $t$, when using policy \texttt{NAdapLP}($\alpha,\gamma$).
Let $X^t_{uv}$ be the event that $u$ is matched by \texttt{NAdapLP}($\alpha,\gamma$) using edge $e=(u,v)$ at time $t$; note that $X^t_{uv}$ and $R^t_v$ are independent
By conditioning on $R^t_v$, the probability of $X^t_{uv}$ as follows 
\begin{align*}
    X^t_{uv} &= P[ X^t_{uv} | R^t_v] = \alpha\frac{x^*_{et}}{p_{vt}} p_{vt} \\
    &= \alpha x^*_{et}
\end{align*}

\end{proof}

% {\color{red} %%% begin - remove section
% Lemma~\ref{lem:prob-match-nonadapt-fixedtime} leads to several further observations about this policy.
% %
% \begin{corollary}\label{cor:fixedtime-weight-app}
%  \texttt{NAdapLP}($\alpha$) has competitive ratio $CR\geq\alpha$. % matches weight $\alpha \sum\limits_{t\in \mT} \sum\limits_{e\in E} x_{et}^* w_{et}$ in expectation.
% \end{corollary}
% %
% \begin{corollary}
%  \texttt{NAdapLP}($\alpha$) is always $\gamma$-\fair{} in expectation.
% \end{corollary}
% %
% Both corollaries follow directly from Lemma~\ref{lem:prob-match-nonadapt-fixedtime} and the constraints of Problem~\ref{eq:fixedtime-offline-opt}-LP.
% %
% These results suggest that we can arbitrarily increase the weight matched by \texttt{NAdapLP}($\alpha$) by increasing $\alpha$; however these policies are not guaranteed to be \emph{valid}.
% %
% This policy can only be implemented if $\alpha$ is small enough that the policy is valid.
% %
% \begin{lemma}
% %
% Policy \texttt{NAdapLP}($1/D$) is always valid and achieves a competitive ratio of $CR \geq 1/D$ for all $\gamma\in [0, 1]$, where $D$ is the maximum degree of any donor: $D \equiv \max_{u\in U} |E_{u:}|$.
% %
% \end{lemma}
% %
% }%%% end - remove section

\paragraph{Proof of Lemma~\ref{lem:nadaplp}: \texttt{NadapLP}($1/D,\gamma$) is always valid}
\begin{proof}
Corollary~\ref{cor:fixedtime-weight} states that the weight matched by \texttt{NAdapLP}($\alpha,\gamma$) is proportional to the optimal objective of Problem~\ref{eq:ratelimit-offline-opt}-LP, thus the competitive ratio of \texttt{NAdapLP}($\alpha,\gamma$) is $\alpha$. 
It remains to show that this policy is valid for $\alpha=1/D$.

Constraints in Problem~\ref{eq:ratelimit-offline-opt}-LP state that $x_{et}/p_{vt}\leq 1$; therefore $\sum_{e\in E^t_{u:}} x^*_{et}/p_{vt} \leq |E^t_{u:}| \leq D$ and $\frac{1}{D}\sum_{e\in E^t_{u:}} x^*_{et}/p_{vt} \leq 1$, meaning that this policy is valid for $\gamma = 1/D$.
\end{proof}

\paragraph{Proof of Lemma~\ref{lem:nadapopt}: $EP=\gamma$ and $CR\geq 1/D$ for \texttt{NAdapOpt}($\gamma$)}
\begin{proof}
First, since $y^*_{et}$ is a feasible solution for Problem~\ref{eq:optimal-nonadap-fixedtime}, Policy \texttt{NAdapOpt}($\gamma$) has expected \fairness{} $EP=\gamma$ due to constraints in Problem~\ref{eq:optimal-nonadap-fixedtime}.
Furthermore, if $y^*_{et}$ is an optimal solution, then the corresponding \texttt{NAdapOpt}($\gamma$) policy has both $EP=\gamma$, and maximal competitive ratio $CR$.

Since policy \texttt{NAdapLP}$(1/D,\gamma)$ achieves competitive ratio $CR=1/D$, it follows that \texttt{NAdapOpt\_Fixedtime} achieves a competitive ratio at least $1/D$. 
To further illustrate this, consider the pre-match distribution used by policy \texttt{NAdapLP}$(1/D)$: edge $e$ is matched at time $t$ with probability $\alpha x^*_{et}/p_{vt}$, where $x^*_{et}$ is an optimal solution to Problem~\ref{eq:fixedtime-offline-opt}-LP.
Note that $\overline y_{et}\equiv \frac{1}{D} \frac{x^*_{et}}{p_{vt}}$ is a feasible solution to Problem~\ref{eq:optimal-nonadap-fixedtime} (condition $\sum_{e\in E^t_{u}} \overline y_{et}\leq 1$ is met, due to constraints in Problem~\ref{eq:fixedtime-offline-opt}-LP).
Since this non-adaptive policy achieves $CR=1/D$, an optimal non-adaptive policy (corresponding to an optimal solution of Problem~\ref{eq:optimal-nonadap-fixedtime}) achieves competitive ratio $CR\geq 1/D$.
\end{proof}

\section{Rate-Limited Notification Policies}\label{app:rate-limit}

Rather than fixing the time steps when donors can be notified (``fixed time'' policies), here we consider policies which also determine \emph{when} to notify donors, subject to a rate-limiting constraint.
As discussed in Section~\ref{sec:matching-policies} it is necessary to limit the frequency that donors receive notifications; here, we require that donors are notified \emph{at most} once every $K$ days.
As in the previous section, we first describe the offline-optimal policy for a known demand realization $\hat p_{vt}$; this policy is identified using an optimal solution to Problem~\ref{eq:ratelimit-offline-opt}.
\begin{equation}
\label{eq:ratelimit-offline-opt}
    \begin{array}{rll}
        \max & \sum\limits_{t\in \mT}\sum\limits_{e\in E} w_{et} x_{et}\\
        \text{s.t.} & x_{et} \in \{0,1\} & \forall e\in E\; t\in \mT \\
        & a_{ut} \in \{0,1\} & \forall u\in U\; t\in \mT \\
        & s_{v} \in \mathbb{R} &\forall v\in V\\
        % & x_{et} \leq \hat p_{vt} a_{ut} & \forall e=(u,v) \in E, \,t\in \mT\\ % !!! - changed this because \hat{p}_{vt} and a_{ut} might actually be correlated...  
        & x_{et} \leq \hat p_{vt} & \forall e=(u,v) \in E, \,t\in \mT\\
            & x_{et} \leq a_{ut} & \forall e=(u,v) \in E, \,t\in \mT\\
        & \sum\limits_{e\in E_{u:}} x_{et} \leq a_{ut} & \forall u\in U,\, t\in \mT \\
        & a_{ut} = 1 - \sum\limits_{t'=t-K+1}^{t-1}\sum\limits_{e\in E_{u:}^t} x_{et} & \forall u\in U,\, t\in \mT \\
        & s_{v} = \frac{1}{m_{v}} \sum\limits_{t\in \mT}\sum\limits_{e\in E_{:v}} x_{et} w_{et} & \forall v\in V\\
        & \gamma s_{v} \leq s_{v'} & \forall v,v'\in V,\, v\neq v'.\\
    \end{array}
\end{equation}

This problem differs from the fixed-time setting (Problem~\ref{eq:fixedtime-offline-opt}) in that donor availability $a_{ut}$ is not pre-determined, rather it depends on past matching decisions: on time $t$, if donor $u$ has been matched in the prior $K-1$ time steps, then $a_{ut}=1$, and otherwise $a_{ut}=0$; thus, $a_{ut}\in \{0, 1\}$ is an auxiliary variable defined using constraint $a_{ut} = 1 - \sum\limits_{t'=t-K+1}^{t-1}\sum\limits_{e\in E_{u:}^t} x_{et}$.
Using an optimal solution to Problem~\ref{eq:ratelimit-offline-opt}, offline optimal policy \texttt{OPT}($\gamma$) and competitive ratio $CR$ are defined identically here as in the fixed-time setting.

Further, both baseline policies \RAND{} and \MAX{}, as well as expected \fairness{} metric $EP$ are defined identically here as in the fixed-time setting; however, in the rate-limited setting donors are \emph{available} only if they have not been matched in any of the previous $K-1$ time steps.
As before, \RAND{} is $1$-\fair{} by definition, while \MAX{} is still $0$-\fair{} in the worst case (using the same example as in Lemma~\ref{lem:fixedtime-max-0-fair}).

However, unlike in the fixed-time setting, \MAX{} does not always maximize competitive ratio.
This is intuitive: policies \RAND{} and \MAX{} are myopic, in the sense that they ignore changes in edge weights or donor availability over time.
Instead they match donors as soon as they are available (once every $K$ days at most, if there is an available edge), which can lead to a matching with arbitrarily low weight.
Consider an example donation graph with one donor and one recipient, with two time steps and $K=2$ (the donor may be matched once). 
For $t=1$ the edge weight is $\epsilon\simeq 0$, while for $t=2$ the edge weight is $1$. 
Since both \MAX{} and \RAND{} both match the donor on the first time step $t=1$, the competitive ratio $CR$ can be arbitrarily small.
\begin{lemma}
In the rate-limited setting, the competitive ratio for both \MAX{} and \RAND{} is $CR=\epsilon$, where $\epsilon$ is the smallest edge weight in the graph.
\end{lemma}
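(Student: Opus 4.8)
The plan is to show that both \MAX{} and \RAND{} attain competitive ratio exactly $\epsilon$ (the smallest edge weight) in the rate-limited setting, by exhibiting a family of instances on which this ratio is achieved and arguing it cannot be beaten by these myopic policies. The statement just above the lemma already supplies the decisive construction: a single donor, a single recipient, two time steps, and $K=2$, with edge weight $\epsilon$ at $t=1$ and weight $1$ at $t=2$. I would first formalize the upper-bound direction on this instance. Both \MAX{} and \RAND{} are myopic, so on $t=1$ they see exactly one available edge (of weight $\epsilon$), match it immediately, and thereby exhaust the rate limit: the donor is unavailable at $t=2$. Hence the matched weight of either policy is $\epsilon$. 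Meanwhile \OPT{0}, having full knowledge of the realization, would defer the match to $t=2$ and collect weight $1$. Therefore on this instance $E[\texttt{ALG}]/E[\OPT{0}] = \epsilon/1 = \epsilon$, and since $CR$ is defined as the worst-case (minimizing) ratio over all graphs, demand distributions, and donor availability, we immediately get $CR \le \epsilon$ for both policies.

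Second, I would argue the matching lower bound, i.e.\ that $CR \ge \epsilon$ for these myopic policies, so that the value is exactly $\epsilon$. The idea is that $\epsilon$ denotes the smallest edge weight present in the graph, so every edge a myopic policy ever matches carries weight at least $\epsilon$. Whenever \OPT{0} matches a donor along some edge of weight $w \le 1$ (edge weights being probabilities, hence bounded by $1$), a myopic policy matches that same donor at least once over the horizon—it matches greedily as soon as an available edge exists—collecting weight at least $\epsilon$. Comparing per-donor contributions, the myopic policy earns at least an $\epsilon$ fraction of what \OPT{0} earns from each donor, since $\OPT{0}$ earns at most $1$ per match and the myopic policy earns at least $\epsilon$ per match, with the two matching the same set of donors over the horizon when $K$ ties availability to prior matches symmetrically. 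Summing over donors and taking expectations via linearity yields $E[\texttt{ALG}] \ge \epsilon\, E[\OPT{0}]$, hence $CR \ge \epsilon$. Together with the upper bound this pins $CR = \epsilon$.

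I would structure the write-up as two short paragraphs—one per inequality—and note that the \RAND{} case is identical to \MAX{} here because on the worst-case instance there is only a single available edge at each relevant step, so the randomization is vacuous. The proof of Lemma~\ref{lem:max-opt-fixedtime} (that \MAX{} is offline-optimal in the fixed-time setting) cannot be invoked directly, precisely because the rate limit couples decisions across time steps and breaks the per-time-step decomposition; I would flag this explicitly to explain why a new, weaker bound is needed.

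The main obstacle I anticipate is making the lower-bound direction fully rigorous. The upper bound is a clean one-instance calculation, but arguing $CR \ge \epsilon$ in general requires a careful per-donor coupling between the myopic policy's matches and \OPT{0}'s matches under the same demand realization: one must verify that each donor matched by \OPT{0} is also matched by the myopic policy at least as often (so no \OPT{0} contribution goes entirely uncompensated), which relies on the fact that a myopic policy always matches an available donor when any compatible edge is present. Handling ties, the randomization in \RAND{}, and the interaction of the $K$-day rate limit with donor availability $a_{ut}$ across realizations is where the bookkeeping could become delicate, so I would keep the comparison at the level of expected per-donor weight and lean on linearity of expectation to avoid tracking individual realizations.
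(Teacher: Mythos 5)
Your upper-bound paragraph is, in substance, the paper's entire proof: the paper justifies this lemma solely by the one-donor, one-recipient, two-step, $K=2$ construction with weights $\epsilon$ at $t=1$ and $1$ at $t=2$ (it appears in the text immediately before the lemma, and there is no separate proof in the appendix). Both myopic policies spend the donor's single permitted match at $t=1$ for weight $\epsilon$, while \OPT{0} defers and collects $1$, so the ratio is $\epsilon$ and can be made arbitrarily small. That half of your proposal is correct and identical to the paper's argument.

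The lower-bound direction ($CR \geq \epsilon$ on every instance) goes beyond anything the paper proves, and as sketched it has a genuine gap---one you partially flag yourself but do not close. The claim that ``whenever \OPT{0} matches a donor, the myopic policy matches that donor at least once'' is too weak: if \OPT{0} matched some donor twice (weight up to $1$ each time) while the myopic policy matched that donor only once, the per-donor ratio would be $\epsilon/2 < \epsilon$, and nothing in your argument rules this out; the phrase ``matching the same set of donors'' does not imply matching them the same \emph{number of times}. What you actually need is match-count domination: for each donor $u$ and each demand realization, the myopic policy matches $u$ at least as many times as \OPT{0} does. This is true and provable. Under a fixed realization, let $S_u$ be the set of time steps at which $u$ has at least one available edge; the match times of \emph{any} feasible policy for $u$ (including \OPT{0}, which is bound by the same rate-limit constraint in Problem~\ref{eq:ratelimit-offline-opt}) form a subset of $S_u$ whose consecutive elements are at least $K$ apart, and both \MAX{} and \RAND{} perform the earliest-possible such selection regardless of which edge they pick. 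By the standard exchange argument (induction shows the $i$-th greedy match time is no later than the $i$-th match time of any feasible selection), the earliest-possible selection has maximum cardinality, so $n_u^{\mathrm{ALG}} \geq n_u^{\mathrm{OPT}}$. Then $\epsilon\, n_u^{\mathrm{ALG}} \geq \epsilon\, n_u^{\mathrm{OPT}} \geq \epsilon\, W_u^{\mathrm{OPT}}$, where $W_u^{\mathrm{OPT}}$ is the weight \OPT{0} collects from $u$ (each of its matches has weight at most $1$), and summing over donors and taking expectations over the realization and \RAND{}'s coin flips gives $E[\texttt{ALG}] \geq \epsilon\, E[\OPT{0}]$. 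Without this exchange lemma, the second half of your proposal is incomplete; with it, you obtain a two-sided statement that is strictly stronger than what the paper establishes.
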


As in the previous section, we investigate stochastic non-myopic policies. 
Mirroring our analysis of the fixed-time setting, we first investigate non-adaptive policies, and we extend these to develop approximate adaptive policies.

\paragraph{Non-Adaptive Rate-Limited Policies}
The policies in this section are analogous to the non-adaptive fixed-time policies, but for a rate-limited setting. 
Surprisingly, the guarantees on competitive ratio and expected \fairness{} for these policies are the identical to those in the fixed-time setting.

We begin with a policy based on the an LP relaxation or Problem~\ref{eq:ratelimit-offline-opt}, which refer to as Problem~\ref{eq:ratelimit-offline-opt}-LP. 
As before, this relaxation is almost identical to Problem~\ref{eq:ratelimit-offline-opt}; the only difference being that variables $x_{et}$ and $a_{et}$ are continuous on $[0, 1]$ rather than binary.
As before, this problem yields a valid upper bound on the objective of Problem~\ref{eq:ratelimit-offline-opt}.
\begin{lemma}
Let $Z_{LP}$ denote the optimal objective of Problem~\ref{eq:ratelimit-offline-opt}-LP for matching problem $\mathcal P=(U,V,E,m_v,p_{vt},\mT)$ and $\gamma \in [0, 1]$. 
Let $E[\OPT{\gamma}]$ be the expected objective of the offline-optimal policy, over all demand realizations.
Then, $ Z_{LP} \geq E[\OPT{\gamma}]$.
\end{lemma}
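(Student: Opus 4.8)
The plan is to mirror the proof of Lemma~\ref{lem:lp-ub} (the fixed-time version) essentially verbatim, the only new ingredient being the rate-limiting constraint that couples donor availability $a_{ut}$ to past matching decisions. For each demand realization $\hat p_{vt}$, let $(x^*_{et}\mid \hat p_{vt})$, together with its induced availabilities $(a^*_{ut}\mid \hat p_{vt})$, be an optimal solution to Problem~\ref{eq:ratelimit-offline-opt}. Define the pointwise expectations $\overline x^*_{et} \equiv E[(x^*_{et}\mid \hat p_{vt})]$ and $\overline a^*_{ut}\equiv E[(a^*_{ut}\mid \hat p_{vt})]$, where the expectation is over all demand realizations drawn from $p_{vt}$. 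The goal is to show that $(\overline x^*, \overline a^*)$ is feasible for Problem~\ref{eq:ratelimit-offline-opt}-LP and that its LP objective equals $E[\OPT{\gamma}]$; since $Z_{LP}$ is the maximum over all feasible points, the bound $Z_{LP}\geq E[\OPT{\gamma}]$ follows immediately.

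First I would verify feasibility constraint-by-constraint, using that every constraint of Problem~\ref{eq:ratelimit-offline-opt} is linear in $(x,a)$ and that $E[\hat p_{vt}] = p_{vt}$ by definition. Each integral solution satisfies $(x^*_{et}\mid \hat p_{vt})\leq \hat p_{vt}$ pointwise; taking expectations yields $\overline x^*_{et}\leq p_{vt}$, exactly the relaxed demand constraint. The capacity constraints $\sum_{e\in E_{u:}} x_{et}\leq a_{ut}$, the coupling $x_{et}\leq a_{ut}$, and the \fairness{} constraints $\gamma s_v\leq s_{v'}$ (where $s_v = \tfrac{1}{m_v}\sum_{t}\sum_{e\in E_{:v}} x_{et} w_{et}$ is linear in $x$) are all preserved under expectation. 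Because every realization has $a^*_{ut}\in\{0,1\}$, its expectation $\overline a^*_{ut}$ lies in $[0,1]$, satisfying the relaxed box constraint. Finally, by linearity of expectation the LP objective $\sum_{t}\sum_{e} w_{et}\overline x^*_{et}$ equals the expected offline objective, which we denote $E[\OPT{\gamma}]$.

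The one step deserving care---and the reason this result is flagged as ``surprising''---is the rate-limit equality $a_{ut} = 1 - \sum_{t'=t-K+1}^{t-1}\sum_{e\in E_{u:}^t} x_{et'}$, which ties availability to a rolling $K$-day window of past decisions rather than fixing it exogenously. The key observation is that this relation is still \emph{linear} in the decision variables, so applying $E[\cdot]$ to both sides gives $\overline a^*_{ut} = 1 - \sum_{t'=t-K+1}^{t-1}\sum_{e\in E_{u:}^t} \overline x^*_{et'}$, i.e.\ $(\overline x^*,\overline a^*)$ satisfies the identical equality. Although $\overline a^*_{ut}$ may now be fractional, this is permitted in the relaxation, so no feasibility is lost; this is precisely why the endogenous availability of the rate-limited setting does not weaken the bound relative to the fixed-time case. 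I expect the main obstacle to be convincing the reader that coupling availability to prior decisions does not break the ``expectation of feasible solutions is feasible'' argument. Once the linearity of the coupling constraint is isolated, the remainder of the argument is identical to Lemma~\ref{lem:lp-ub}.
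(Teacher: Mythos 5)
Your proposal is correct and follows essentially the same approach as the paper: the paper omits this proof, stating it is nearly identical to that of Lemma~\ref{lem:lp-ub}, and your argument is exactly that fixed-time proof (expectation of per-realization optimal solutions is LP-feasible, with objective equal to $E[\OPT{\gamma}]$ by linearity). Your explicit verification that the endogenous availability constraint $a_{ut} = 1 - \sum_{t'=t-K+1}^{t-1}\sum_{e\in E_{u:}^t} x_{et'}$ is linear and hence preserved under expectation is precisely the detail the paper leaves implicit, so you have, if anything, been more careful than the paper itself.
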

The proof of this lemma is nearly identical to that of Lemma~\ref{lem:lp-ub}, and we omit it here.

The first non-adaptive policy for the rate-limited setting is based on an optimal solution to Problem~\ref{eq:ratelimit-offline-opt}-LP, and is analagous to \texttt{NadapLP} from the previous section:

\begin{definition}[\texttt{NAdapLP\_\,Rate}($\alpha,\gamma$)]
Let $x^*_{et}$ denote an optimal solution to Problem~\ref{eq:ratelimit-offline-opt}-LP, with \fairness{} parameter $\gamma$.
For each time step $t\in \mT$ and each donor $u\in U$, edge $e\in E_{u:}$ is pre-matched with probability $\alpha x^*_{et}/\beta_{ut} p_{vt}$, and the donor is not pre-matched with probability $1 - \alpha \sum_{e=(u,v) \in E_{u:}} \frac{x^*_{et}}{\beta_{ut} p_{vt}}$.
Each parameter $\beta_{ut}$ is equal to the probability that donor $u$ is available at time $t$ under this policy; these parameters are estimated via simulation.\footnote{Please see \cite{Dickerson18:Allocation} for a discussion of this method, which inspired this policy.}
At each time step, all donors with a pre-matched edge for the time step are matched---if both the donor and recipient are available.
\end{definition}

Somewhat surprisingly, each of the important properties of \texttt{NAdapLP} also apply to \texttt{NAdapLP\_Rate}; the proofs are nearly equivalent to the corresponding proofs in the fixed-time setting, and we omit them here.
\begin{lemma}
 Let $x^*_{et}$ be the optimal solution used in policy \texttt{NAdapLP\_Rate}($\alpha,\gamma$).
 The unconditional probability that edge $e$ is matched at time $t$ by policy \texttt{NAdapLP\_Rate} is $\alpha x^*_{et}$.
\end{lemma}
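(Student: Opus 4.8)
The plan is to mirror the proof of Lemma~\ref{lem:prob-match-nonadapt-fixedtime} for the fixed-time setting, adding one extra conditioning event to account for the donor's now-endogenous availability. First I would fix an edge $e=(u,v)$ and time $t$ and introduce three events: the event $R^t_v$ that recipient $v$ is available, with $P[R^t_v]=p_{vt}$; the event $A^t_u$ that donor $u$ is available under this policy, with $P[A^t_u]=\beta_{ut}$ by the definition of $\beta_{ut}$; and the event $Q^t_{uv}$ that edge $e$ is drawn as the pre-matched edge at time $t$, which occurs with probability $\alpha x^*_{et}/(\beta_{ut}\,p_{vt})$ by the policy definition. By the matching rule of \texttt{NAdapLP\_\,Rate}, edge $e$ is actually matched exactly when all three events occur, i.e.\ $X^t_{uv}=Q^t_{uv}\cap A^t_u\cap R^t_v$.

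Next I would establish the independence needed to factor $P[X^t_{uv}]$. The recipient availability $R^t_v$ is exogenous and hence independent of both the pre-match draw and the donor's availability. The donor availability $A^t_u$ is determined solely by whether $u$ was matched during the previous $K-1$ time steps via the rate-limit relation $a_{ut}=1-\sum_{t'=t-K+1}^{t-1}\sum_{e\in E_{u:}^t}x_{et}$ from Problem~\ref{eq:ratelimit-offline-opt}; since the pre-match draw $Q^t_{uv}$ at time $t$ is a fresh coin flip that does not depend on those strictly earlier steps, $A^t_u$ and $Q^t_{uv}$ are independent. With the three events mutually independent, the unconditional match probability factors cleanly:
$$
P[X^t_{uv}] = P[Q^t_{uv}]\,P[A^t_u]\,P[R^t_v] = \frac{\alpha x^*_{et}}{\beta_{ut}\,p_{vt}}\cdot \beta_{ut}\cdot p_{vt} = \alpha x^*_{et}.
$$
This is the same cancellation as in the fixed-time proof, except that the inflation factor $1/(\beta_{ut}\,p_{vt})$ now cancels a \emph{second} availability probability $\beta_{ut}$ in addition to $p_{vt}$.

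I expect the main obstacle to be the independence argument for $A^t_u$ rather than the arithmetic. Because donor availability is endogenous---a random function of the policy's own earlier matching decisions---one must argue carefully that $\beta_{ut}$ is the correct unconditional (marginal) availability probability and that the time-$t$ pre-match draw is independent of the history determining $A^t_u$. The rate-limit constraint couples the current step only to strictly earlier steps, so this independence holds; the remaining delicacy is that $\beta_{ut}$ is obtained by simulation rather than in closed form, so the identity is exact precisely to the extent that the simulated $\beta_{ut}$ equals the true marginal availability probability.
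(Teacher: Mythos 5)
Your proposal is correct and matches the paper's approach: the paper actually omits this proof, stating it is ``nearly equivalent'' to that of Lemma~\ref{lem:prob-match-nonadapt-fixedtime}, and your argument is precisely that fixed-time conditioning argument extended with the donor-availability event $A^t_u$, so that the inflation factor $1/(\beta_{ut}\,p_{vt})$ cancels both $\beta_{ut}$ and $p_{vt}$. Your closing caveats---that the pre-match draw at time $t$ must be independent of the history determining donor availability, and that the identity is exact only insofar as the simulated $\beta_{ut}$ equals the true marginal availability probability---are faithful to the assumptions the paper makes implicitly.
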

\begin{corollary}\label{cor:nadap-rate-cr}
 \texttt{NAdapLP\_Rate}($\alpha,\gamma$) achieves competitive ratio $CR=\alpha$. % matches weight $\alpha \sum\limits_{t\in \mT} \sum\limits_{e\in E} x_{et}^* w_{et}$ in expectation.
\end{corollary}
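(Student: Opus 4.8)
The plan is to mirror the proof of Corollary~\ref{cor:fixedtime-weight} from the fixed-time setting almost verbatim, since the only structural difference in the rate-limited case---the availability bookkeeping encoded by the normalization factors $\beta_{ut}$---has already been absorbed into the statement of the preceding lemma. First I would invoke that lemma, which asserts that the unconditional probability of matching edge $e$ at time $t$ under \texttt{NAdapLP\_Rate}($\alpha,\gamma$) is exactly $\alpha x^*_{et}$, where $x^*_{et}$ is the optimal solution to Problem~\ref{eq:ratelimit-offline-opt}-LP used to define the policy. This is the crucial input, and it means the remaining argument is purely an expectation computation plus an appeal to the LP upper bound.

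Next I would compute the expected matched weight by linearity of expectation. Summing each edge's contribution $w_{et}$ against its match probability over all edges and time steps gives $E[\texttt{ALG}] = \sum_{t\in\mT}\sum_{e\in E} w_{et}\,\alpha x^*_{et} = \alpha \sum_{t\in\mT}\sum_{e\in E} w_{et} x^*_{et} = \alpha Z_{LP}$, so the policy always earns exactly an $\alpha$-fraction of the objective of Problem~\ref{eq:ratelimit-offline-opt}-LP. I would then combine this with the rate-limited analog of Lemma~\ref{lem:lp-ub}, which establishes $Z_{LP} \ge E[\OPT{\gamma}]$; together these give $E[\texttt{ALG}] = \alpha Z_{LP} \ge \alpha\, E[\OPT{\gamma}]$ on every instance. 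Matching tightness is obtained by exhibiting a worst-case family of instances on which the LP relaxation is integral (so the relaxation inequality is met with equality and the algorithm attains exactly an $\alpha$-fraction of the true offline optimum), yielding $CR = \alpha$.

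I expect the main obstacle to be a bookkeeping subtlety rather than a conceptual one: I must confirm that the benchmark in the competitive-ratio definition is being applied consistently (the definition normalizes against $E[\OPT{0}]$, whereas the clean bound above is naturally stated against $E[\OPT{\gamma}]$), and I should note explicitly that the identity $E[\texttt{ALG}] = \alpha Z_{LP}$ presupposes the policy is \emph{valid}---i.e. that $\alpha$ is small enough that every per-donor pre-match distribution $\alpha \sum_{e=(u,v)\in E_{u:}} x^*_{et}/(\beta_{ut}p_{vt})$ is bounded by $1$, exactly as in the fixed-time case. Since the preceding lemma already encodes the $\beta_{ut}$ correction into the match probability, verifying that this correction leaves the expected-weight computation unchanged (so that the factor $\beta_{ut}$ cancels cleanly) is the one place where the rate-limited argument genuinely differs from its fixed-time counterpart and warrants care.
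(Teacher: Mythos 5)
Your proposal is correct and takes essentially the same route as the paper: the paper omits this proof entirely, stating that it is ``nearly equivalent'' to the fixed-time argument for Corollary~\ref{cor:fixedtime-weight}, and that argument is exactly what you reconstruct---invoke the preceding unconditional match-probability lemma, apply linearity of expectation to obtain expected weight $\alpha Z_{LP}$, and conclude via the rate-limited analog of Lemma~\ref{lem:lp-ub}. The two caveats you flag (that the $CR$ definition benchmarks against $E[\OPT{0}]$ while the LP bound is naturally stated against $E[\OPT{\gamma}]$, and that the identity $E[\texttt{ALG}]=\alpha Z_{LP}$ presupposes validity of the pre-match distributions) are real imprecisions present in the paper itself, not defects of your argument.
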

\begin{corollary}
 \texttt{NAdapLP\_Rate}($\alpha,\gamma$) is always $\gamma$-\fair{} in expectation.
\end{corollary}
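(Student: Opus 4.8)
The plan is to mirror the argument behind the fixed-time corollary: compute the expected weight matched to each recipient directly from the preceding lemma, rewrite that expectation in terms of the auxiliary variables of the defining LP, and then read off \fairness{} from the LP's constraints.

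First I would invoke the lemma stating that the unconditional probability of matching edge $e$ at time $t$ under \texttt{NAdapLP\_Rate}($\alpha,\gamma$) equals $\alpha x^*_{et}$, where $x^*_{et}$ is the optimal solution to Problem~\ref{eq:ratelimit-offline-opt}-LP used to define the policy. By linearity of expectation, the expected weight matched with recipient $v$ is
$$
E[Y_v] = \sum_{t\in \mT} \sum_{e\in E_{:v}} w_{et}\, \alpha x^*_{et} = \alpha \sum_{t\in \mT} \sum_{e\in E_{:v}} w_{et}\, x^*_{et}.
$$

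Next I would identify the inner double sum as $m_v s_v^*$, where $s_v^*$ is the value of the auxiliary variable $s_v$ in the optimal solution $x^*_{et}$; this is immediate from the defining constraint $s_v = \frac{1}{m_v}\sum_{t\in\mT}\sum_{e\in E_{:v}} x_{et} w_{et}$ of Problem~\ref{eq:ratelimit-offline-opt}-LP. Hence $E[Y_v]/m_v = \alpha\, s_v^*$ for every $v\in V$. Finally I would apply the \fairness{} constraints $\gamma s_v^* \leq s_{v'}^*$ that $x^*_{et}$ satisfies as a feasible point of the LP; multiplying through by $\alpha \geq 0$ and substituting yields $\gamma\, E[Y_v]/m_v \leq E[Y_{v'}]/m_{v'}$ for all $v, v' \in V$, which is exactly the condition that the policy is $\gamma$-\fair{} in expectation.

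There is no substantive obstacle here: the statement is a direct consequence of the preceding lemma together with linearity of expectation, and is structurally identical to the fixed-time corollary. The only point requiring care is confirming that the normalization scores $m_v$ appearing in the definition of $\gamma$-\fairness{} coincide with those used inside Problem~\ref{eq:ratelimit-offline-opt}-LP, so that the per-recipient expectations pass cleanly into the LP variables $s_v^*$ without any rescaling.
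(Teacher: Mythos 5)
Your proof is correct and follows exactly the route the paper intends: the paper omits this proof, stating it is nearly identical to the fixed-time case, where the corresponding corollary is said to ``follow directly from Lemma~\ref{lem:prob-match-nonadapt-fixedtime} and the constraints of Problem~\ref{eq:fixedtime-offline-opt}-LP''---which is precisely the chain you spell out (unconditional matching probability $\alpha x^*_{et}$, linearity of expectation, identification of $E[Y_v]/m_v$ with $\alpha s_v^*$, and the LP's \fairness{} constraints). Your closing remark about the normalization scores $m_v$ matching those inside the LP is a reasonable point of care, and it holds since both are the same exogenous input to the policy and the LP.
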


As in the fixed-time setting, policy  \texttt{NAdapL\_Rate}($\alpha,\gamma$) can only be implemented if $\alpha$ is small enough that the policy is valid.
\begin{lemma}
Policy \texttt{NAdapLP\_Rate}($1/(2D),\gamma$) is always valid and achieves a competitive ratio of $CR \geq 1/(2D)$ for all $\gamma\in [0, 1]$, where $D$ is the maximum degree of any donor: $D \equiv \max_{u\in U} |E_{u:}|$.
\end{lemma}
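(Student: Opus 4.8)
The plan is to prove the two claims separately, reusing the rate-limited analogues of the fixed-time results. The competitive ratio is immediate: by Corollary~\ref{cor:nadap-rate-cr}, \texttt{NAdapLP\_Rate}($\alpha,\gamma$) achieves $CR=\alpha$, so setting $\alpha=1/(2D)$ gives $CR=1/(2D)\geq 1/(2D)$. All the real content lies in showing that $\alpha=1/(2D)$ is small enough for the pre-match distribution to be a genuine probability distribution, i.e. that
$$\alpha \sum_{e=(u,v)\in E_{u:}} \frac{x^*_{et}}{\beta_{ut}\, p_{vt}} \leq 1 \qquad \text{for all } u\in U,\ t\in \mT.$$

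First I would bound the summand exactly as in the fixed-time argument. The relaxation Problem~\ref{eq:ratelimit-offline-opt}-LP retains the constraint $x_{et}\leq p_{vt}$ (the realization $\hat p_{vt}$ is simply replaced by the distribution $p_{vt}$), so $x^*_{et}/p_{vt}\leq 1$ for every edge, and summing over the at most $D$ edges incident to any donor gives $\sum_{e=(u,v)\in E_{u:}} x^*_{et}/p_{vt}\leq |E_{u:}|\leq D$. Hence the left-hand side above is at most $\alpha D/\beta_{ut}$, and validity reduces to establishing the uniform bound $\beta_{ut}\geq 1/2$: once this holds, $\alpha D/\beta_{ut}\leq (1/(2D))\cdot D\cdot 2 = 1$.

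The core step, and the main obstacle, is the lower bound $\beta_{ut}\geq 1/2$, where $\beta_{ut}$ is the probability that donor $u$ has not been matched in the previous $K-1$ time steps. By the rate-limited analogue of Lemma~\ref{lem:prob-match-nonadapt-fixedtime}, the unconditional probability that $u$ is matched at a single time $t'$ equals $\alpha\sum_{e\in E_{u:}} x^*_{et'}$ (the per-edge events are mutually exclusive, since a donor is matched on at most one edge per step). A union bound over the window $t'\in\{t-K+1,\dots,t-1\}$ then bounds the probability that $u$ is matched somewhere in the window by $\alpha\sum_{t'=t-K+1}^{t-1}\sum_{e\in E_{u:}} x^*_{et'}$. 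The rate-limit constraint of Problem~\ref{eq:ratelimit-offline-opt}-LP defines the availability variable as $a_{ut}=1-\sum_{t'=t-K+1}^{t-1}\sum_{e\in E_{u:}} x^*_{et'}$, and $a_{ut}\geq 0$ forces the cumulative match mass over any length-$(K-1)$ window to be at most $1$. Therefore $u$ is matched in the window with probability at most $\alpha\leq 1/2$ (using $D\geq 1$), so $\beta_{ut}\geq 1-\alpha\geq 1/2$.

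Combining the two bounds yields validity, and together with $CR=\alpha=1/(2D)$ this proves the lemma. The delicate point is the union-bound step: it relies on knowing the per-step match probabilities are exactly $\alpha x^*_{et}$ and on reading the cumulative-mass cap out of the LP availability constraint; there is also a mild self-consistency to note, since $\beta_{ut}$ appears both in the definition of the policy and in its own analysis, but this is already resolved by the rate-limited unconditional match-probability lemma. A sharper argument than the union bound (inclusion–exclusion, or a direct coupling) would improve the constant, but the crude factor $1/2$ is all the statement requires.
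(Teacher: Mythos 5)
Your proof is correct and takes essentially the same route as the paper's: validity is reduced to the bound $\beta_{ut}\geq 1/2$, obtained from the per-edge LP constraint $x^*_{et}\leq p_{vt}$ (giving the factor $D$) together with the availability constraint forcing the match mass in any length-$(K-1)$ window to be at most $1$, and the competitive ratio follows from Corollary~\ref{cor:nadap-rate-cr}. If anything, your chain $\beta_{ut}\geq 1-\alpha\geq 1/2$ is written more carefully than the paper's, whose intermediate step ($\beta_{ut}\geq 1-\alpha/D$ followed by $=1/2$) contains an algebraic typo while reaching the same conclusion.
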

\begin{proof}
First we observe that \texttt{NAdapLP\_Rate}($\alpha,\gamma$) is valid if $\alpha \leq \beta_{ut} / D$, where $D$.
Next, we show that $\beta_{ut}\geq 1/2$ for policy \texttt{NAdapLP\_Rate}($1/(2D),\gamma$); thus we set $\alpha \gets 1/(2D)$ for the remainder of this proof.
To demonstrate this, we assume that all donors are available at the first time step ($\beta_{u1}=1$), and thus $\beta_{u1}\geq 1/2$.
For all other time steps, $\beta_{ut}$ is expressible as
$$\beta_{ut} \equiv 1 - \sum\limits_{t'=t - K + 1}^{t - 1} P_{ut}$$
where $X_{et}$ is the probability that $u$ is matched at time $t$.
Thus, we can express $\beta_{ut}$ in terms of the decision variables $x^*_{et}$ used to define policy \texttt{NAdapLP\_Rate}($\alpha,\gamma$):
\begin{align*}
    \beta_{ut} &= 1 - \sum\limits_{t'=t - K + 1}^{t - 1} \sum_{e \in E_{:u}} \alpha x^*_{et} \\
    &\geq 1 - \frac{\alpha}{D}\\
    &= 1/2
\end{align*}
Thus, for $\alpha = 1/(2D)$, $\beta_{ut} \geq 1/2$, and $\alpha \leq \beta_{ut}/D$.
Therefore policy \texttt{NAdapLP\_Rate}($1/(2D)$) is always valid; due to Corollary~\ref{cor:nadap-rate-cr} this policy achieves competitive ratio $CR=1/(2D)$.
\end{proof}

%%%%%%%%%%%%%%%%%
\end{document}